\newif\ifarXiv         
\newif\ifjournal        
\def\R{\mathbb{R}}
\def\spaceX{\mathbb{X}}
\def\spaceY{\mathbb{Y}}
\newcommand{\innerp}[1]{\langle{#1}\rangle}
\newcommand{\dbinnerp}[1]{\langle\hspace{-1mm}\langle{#1}\rangle \hspace{-1mm}\rangle}
\newcommand{\floor}[1]{\lfloor{#1}\rfloor}
\newtheorem{theorem}{Theorem}
\newtheorem{assumption}[theorem]{Assumption}
\newtheorem{definition}[theorem]{Definition}
\newtheorem{lemma}[theorem]{Lemma}
\newtheorem{remark}[theorem]{Remark}
\newenvironment{proof}[1][Proof]{\noindent\textbf{#1.} }{\ \rule{0.5em}{0.5em}}
\numberwithin{equation}{section}
\numberwithin{theorem}{section}
\def\calE{\mathcal{E}}
\def\calR{\mathcal{R}}
\def\Gbar{ {\overline{G}} }
\def\Abar{ {\overline{A}} }
\def\bbar{ {\overline{b}} }
\def\LGbar{ {\mathcal{L}_{\overline{G}}}  }
\def\mH{\mathcal{H}}
\def\supp{\mathrm{supp}}
\newcommand{\argmin}[1]{\underset{#1}{\operatorname{arg}\operatorname{min}}\;}
\newcommand{\argmax}[1]{\underset{#1}{\operatorname{arg}\operatorname{max}}\;}
\title{Data adaptive RKHS Tikhonov regularization \\  for learning kernels in operators}
\author[1]{Fei Lu}
\author[2]{Quanjun Lang}
\author[3]{Qingci An}
\affil[1,2,3]{\footnotesize  Department of Mathematics, Johns Hopkins University, Baltimore, MD 21218, USA \hspace{20mm} 
\href{feilu@math.jhu.edu} {feilu@math.jhu.edu}, \href{qlang1@jhu.edu}{qlang1@jhu.edu}, 
\href{qan2@jhu.edu}{qan2@jhu.edu} 
}
\date{}
\begin{document}

\maketitle

\begin{abstract}%
We present DARTR: a Data Adaptive RKHS Tikhonov Regularization method for the linear inverse problem of nonparametric learning of function parameters in operators. A key ingredient is a system intrinsic data adaptive (SIDA) RKHS, whose norm restricts the learning to take place in the function space of identifiability.  DARTR utilizes this norm and selects the regularization parameter by the L-curve method. We illustrate its performance in examples including integral operators, nonlinear operators and nonlocal operators with discrete synthetic data. 
Numerical results show that DARTR leads to an accurate estimator robust to both numerical error due to discrete data and noise in data, and the estimator converges at a consistent rate as the data mesh refines under different levels of noises, outperforming two baseline regularizers using $l^2$ and $L^2$ norms.
\end{abstract}

\textbf{Key words}:  ill-posed inverse problem, Tikhonov regularization, RKHS, identifiability 

\section{Introduction}
Regularization plays a crucial role in inverse and machine learning problems that aim to construct robust generalizable models. The learning of kernel functions in operators is such a problem: given data $ \{(u_k,f_k)\}_{k=1}^N$ in suitable function spaces, we would like to learn an optimal kernel function $\phi$ fitting the operator $R_\phi(u)=f$ to the data. Such a need for learning operators between function spaces has become vital in applications ranging from integral operators solving PDEs and image processing (see e.g., \cite{gin2021deepgreen,li2020neural,kovachki2021neural,owhadi2019kernel}), nonlinear operators in mean-field equation of  interacting particle systems in \cite{LMT21,LangLu22},  homogenized nonlocal operators (see e.g., \cite{you2021_DatadrivenLearning,you2022_DatadrivenPeridynamic,lin2021operator}),  just to name a few. 
Since there is often limited information to derive a parametric form, the kernel has to be learnt in a nonparametric fashion. More importantly, the goal is a consistent estimator that converges as data mesh refines and is robust to noise in data. Without proper regularization, the estimator often oscillates largely from data to data due to overfitting.  Thus, regularization is crucial for the discovery of the best kernel. 

We present DARTR, a data adaptive RKHS Tikhonov regularization (DARTR) method, for the linear inverse problem of learning of kernels in operators from data. That is, the operator $R_\phi(u)$, which can be either linear or nonlinear in $u$, depends linearly on the kernel $\phi$. We learn the kernel by nonparametric regression that minimizes a loss functional of the mean square error. With DARTR, our nonparametric regression algorithm produces an estimator that converges as the data mesh refines and the rate of convergence is robust to different levels of white noise in data.  In numerical examples including integral operators, nonlinear operators and nonlocal operators with discrete noisy synthetic data,  DARTR consistently leads to accurate estimators, and the estimator converges at a consistent rate as the data mesh refines under different levels of noises, outperforming two baseline regularizers using $l^2$ and $L^2$ norms.

The major novelty of this method is the construction of a system (the operator) intrinsic data adaptive (SIDA) RKHS, whose reproducing kernel is encoded in the loss functional. DARTR takes the norm of this RKHS as the the penalty norm of regularization, and ensures the learning to take place in the function space of identifiability.  Additionally, we introduce a novel exploration measure quantifying the exploration of the kernel by the data, and it allows a unified framework to treat SIDA-RKHS with either discrete or continuous functions.


\subsection{Related work}
\noindent\textbf{Relation to classical regression.} When the data $ \{(u_k,f_k)\}_{k=1}^N$ are scalars instead of functions and the operator $R_\phi(u) = \phi(u)$, we get back to the classical regression problem (see e.g., \cite{CS02,Gyorfi06a}). Our data adaptive RKHS reduces to the empirical $L^2(\rho)$ space with $\rho$ being the distribution of data $\{u\}_i$, and the DARTR regulation reduces to the classical $L^2$ Tikhonov/ridge regularization (see e.g., \cite{tihonov1963solution}).   

\vspace{1mm}
\noindent\textbf{Relation to functional data analysis.} The problem of learning kernel in operators can be viewed as a functional data analysis (see e.g., \cite{hsing2015theoretical,kadri_OperatorvaluedKernels}), where the data are samples from distributions on function spaces.  Our ur DARTR method is applicable to this setting. However, this study focuses on the situation of limited deterministic data (with only a few pairs of data) and on discovering an intrinsic low-dimensional kernel function.  

\vspace{1mm}
\noindent\textbf{Tikhonov regularization methods.} DARTR differs from other Tikhonov/ridge regularization methods at the penalty term. The commonly used penalty terms include the Euclidean norm in the classical Tikhonov regularization (\cite{hansen1994_regularization_tools,hansen_LcurveIts_a,gazzola2019ir}), the RKHS norm with an ad hoc reproducing kernel (\cite{Cucker2007,bauer2007regularization}), the total variation norm in the Rudin-Osher-Fatemi method in \cite{rudin1992nonlinear}, or the $L^1$ norm in LASSO (see e.g., \cite{tibshirani1996_RegressionShrinkage}). Whereas each of these penalty terms has their specific reasoning and applications, none of them take into account of the function space of identifiability, which is fundamental for the learning of kernels in operators.  

\vspace{1mm}
\noindent\textbf{Data-dependent function spaces.} Data-dependent strategies have been explored in the context of classical nonparametric regression, such as data-dependent hypothesis space with an $l^1$ regularizer in \cite{wang2009analysis,shi2011concentration} and data-dependent early stopping rule in \cite{raskutti2014early}. While all strategies achieve data-dependent regularization, only our DARTR takes into account the function space of identifiably, which is fundamental for the learning of kernels in operators. 



\section{The inverse problem and the need of regularization}


\subsection{Problem statement: learning function parameters in operators}
We consider the linear inverse problem of identifying function parameters in operators from data. That is, given data \vspace{-2mm}
\begin{equation}\label{eq:data}
\mathcal{D}  = \{(u_k,f_k)\}_{k=1}^N, \quad (u_k,f_k)\in \spaceX \times \spaceY 
\end{equation}
where $\spaceX$  and $\spaceY$ are Hilbert spaces, our goal is to find a function parameter $\phi$ in an operator $R_\phi: \spaceX\to \spaceY$ so that $R_\phi$ best fits the data pairs $\{(u_k,f_k)\}_{k=1}^N$:  \vspace{-1mm}
\begin{equation}\label{eq:map_R}
 R_\phi[u]  = f, \end{equation} 
where operator $R_\phi$ can be either linear or nonlinear in $u$ but it depends linearly on $\phi$.  
In this study, we focus on such operators in the form 
\begin{equation}\label{eq:operator}
R_\phi[u](x) = \int_{\Omega} \phi(|y|) g[u](x,y)dy, \, \forall x\in \Omega,
\end{equation}
where $\Omega\subset \R^d$ is a bounded connected open set,  $\phi$ is a radial kernel function, $\spaceX= H^1_0(\Omega)$ and  $\spaceY=L^2(\Omega)$. The functional $g$, which may depend on the derivatives of $u$, is assumed to known and it specifies the form of the operator. Examples are as follows: (see more details in Section \ref{sec:examples})
\begin{itemize}\setlength\itemsep{-1mm}
\item $R_\phi$ is an \emph{integral operator} with  $g[u](x,y)= u(x+y)$ and  $\phi$ is called an integral kernel. 
\item $R_\phi$ is a \emph{nonlinear operator} with $g[u](x,y)= u'(x+y)u(x)$ and $\phi$ is called an interaction kernel in the mean-field equation of interacting particles. 
\item $R_\phi$ is a \emph{nonlocal operator} with $g[u](x,y)= u(x+y)- u(x)$ with $\phi$ called a nonlocal kernel.
 \end{itemize}

These inverse problems share three common features: First, the pointwise values of the function $\phi$ are undetermined from data, because the data depends on $\phi$ non-locally. Also, the support of $\phi$ is unknown and is to be learnt from data. Second, the data are discrete and can be noisy. Thus, the inverse problem has to overcome the numerical error in the approximation of integrals, as well as the measurement noise. Third, the inverse problem can be extended to a homogenization problem where the operator aims to fit the data that are not generated from the equation \eqref{eq:map_R}. In this case, the inverse problem has to overcome the model error to identify a best fit.

\ifjournal \vspace{-2mm} \fi
\subsection{Nonparametric regression and regularization}
Our goal is to infer the kernel function $\phi$ from data in a nonparametric fashion, so as to address the general situations that there is limited information to derive a parametric form for the kernel. Thus, we will not assume any constraint on the function $\phi$. More importantly, we aim for an estimator that is consistent and resolution independent, i.e., converges in a proper function space to the true kernel as data mesh refines and is robust to treat noisy data. 


We construct a variational estimator that minimizes loss functional (the mean square error), \ifjournal \vspace{-1mm} \fi
\begin{equation}\label{eq:lossFn}
\widehat \phi = \argmin{\phi\in \mH}\calE(\phi), \quad \text{ where } \calE(\phi) = \frac{1}{N}\sum_{k=1}^N \|R_\phi[u_k]-f_k\|_{\spaceY}^2,
\end{equation}
where the hypothesis space $\mH$ is to be selected adaptive to data. 
Note that the loss functional $\calE(\phi)$ is quadratic in $\phi$ since the operator $R_\phi$ depends linearly on $\phi$. Thus, the minimizer of the loss functional is a least square estimator. Suppose the hypothesis space is $\mH_n =\mathrm{span}\{\phi_i\}_{i=1}^n$ with basis functions $\{\phi_i\}$. Then for each $\phi =\sum_{i=1}^n c_i \phi_i\in \mH_n$, noticing that $R_\phi = \sum_{i=1}^n c_i R_{\phi_i}$, we can write the loss functional in \eqref{eq:lossFn} as \ifjournal \vspace{-1mm} \fi
\begin{equation}\label{eq:E-squares}
\calE(c) =  \calE(\phi) = c^\top \Abar_n c - 2c^\top \bbar_n+ C_N^f, 
\end{equation}
where $ C_N^f = \frac{1}{N}\sum_{k=1}^N  \int_\Omega |f_k (x) |^2 dx$ and the normal matrix $\Abar_n$ and vector $\bbar_n$ are given by 
\ifjournal \vspace{-2mm} \fi
\begin{equation}\label{eq:Ab}
 \Abar_n(i,j) = \dbinnerp{\phi_i,\phi_j}, \quad \bbar_n(i)= \frac{1}{N}\sum_{k=1}^N  \innerp{ R_{\phi_i}[u_k], f_k }_{\spaceY},
\end{equation}
where $\dbinnerp{\cdot,\cdot}$ is the bilinear form  defined by 
\ifjournal \vspace{-1mm} \fi
\begin{align} \label{eq:binlinearForm} 
\dbinnerp{\phi,\psi}= &\frac{1}{N}\sum_{k=1}^N \innerp{R_{\phi}[u_k], R_{\psi}[u_k]}_{\spaceY}.  
\end{align}
The least square estimator is minimizes the quadratic loss function $\calE(c) $:  \ifjournal \vspace{-1mm} \fi
\begin{equation}\label{eq:LSE} 
 \widehat \phi_{\mH_n}= \sum_{i=1}^n  \widehat c_i \phi_i \quad \text{and   \ } \quad \widehat c = \Abar_n^{-1}\bbar_n, 
\end{equation}
where $\Abar_n^{-1}$ is the inverse of $\Abar_n$ or Moore--Penrose pseudo-inverse when $\Abar_n$ is singular. 

A major challenge is to find an optimal estimator capable of avoiding either under-fitting or over-fitting, being robust to imperfect data and model error and in particular, converging in synthetic tests when the data mesh refines. 
Unfortunately,  this is an ill-posed inverse problem (see Section \ref{sec:Identifiability}) and the normal matrix $\Abar_n$ is often highly ill-conditioned or singular. As a result, the estimator in \eqref{eq:LSE} oscillates largely and fails to converge when the data mesh refines.

Various regularization methods have been introduced to prevent over-fitting in such ill-posed inverse problems. The idea is to add a penalty term to the loss functional: \ifjournal \vspace{-1mm} \fi
\begin{equation}\label{eq:lossFn_reg}
\calE_\lambda(\phi) = \calE(\phi)+\lambda \calR(\phi)
\end{equation}
where $\calR(\phi)$ is a regularization term and $\lambda$ is a parameter which controls the importance of the regularization. Various penalty terms have been proposed, including, for example, the Euclidean norm $\calR(\phi) = \|c\|^2$ for $\phi = \sum_{i=1}^n c_i \phi_i$ in the classical Tikhonov regularization (see e.g., \cite{tihonov1963solution,hansen1998rank}), the RKHS norm $\calR(\phi) = \|\phi\|_H^2$ with $H$ being a reproducing kernel Hilbert space with an artificial reproducing kernel (see e.g., \cite{Cucker2007,bauer2007regularization})
, the total variation norm $\calR(\phi) = \|\phi'\|_{L^1}$ in  Rudin--Osher--Fatemi method or the $L^1$ norm $\calR(\phi) = \|\phi\|_{L^1}$ in LASSO (see e.g., \cite{tibshirani1996_RegressionShrinkage}). 

Whereas each of these penalty terms has their specific reasoning and applications, none of them take into account of the function space of identifiability (by the loss functional, see Section \ref{sec:Identifiability}), only in which the inverse problem is well-defined. Our DARTR method (see Section \ref{sec:alg}) will utilize a norm that restricts the learning in the function space of identifiability, thus providing the most suitable regularization.

\ifjournal \vspace{-1mm} \fi
\section{Identifiability theory and regularization}\label{sec:nonparaReg}
The foundation of learning is the function space of identifiability. We show that the function space of identifiability derived by the loss functional in \eqref{eq:lossFn} is a system intrinsic data adaptive (SIDA) RKHS. This space is the image of the square root of the Fr\'echet derivative of the loss functional, which is a compact operator. Thus the inverse problem is ill-posed since it requires the inversion of a compact operator. 

The main theme our the identifiability theory is to find the function space on which the quadratic loss functional has a unique minimizer. In other words, we seek the function space in which the Fr\'echet derivative of the loss functional  is invertible.
Using the bilinear form $\dbinnerp{\cdot,\cdot}$ in \eqref{eq:binlinearForm}, we can rewrite the loss functional in \eqref{eq:lossFn} as \vspace{-3mm}
\begin{equation}\label{eq:lossFn2} 
\begin{aligned}
\calE(\phi) 
 = \dbinnerp{\phi,\phi} - 2 \frac{1}{N}\sum_{k=1}^N \innerp{ R_\phi[u_k], f_k}_{\spaceY} + C_f,
\end{aligned}
\end{equation}
where $C_N^f = \frac{1}{N}\sum_{k=1}^N  \int |f_k (x) |^2 dx $. However, there is no function space for $\phi$ yet. To start with, we introduce two key elements: a data-adaptive exploration measure that leads to a default function space of learning and an integral operator which plays a crucial role in our DARTR. Throughout this section, we assume continuous data to simplify the notation. All the integrals will be numerically approximated from discrete data in the next section.  
\begin{assumption}\label{assum:data}
 The data $\mathcal{D} = \{u_k, f_k\}_{k=1}^N$ in \eqref{eq:data} are continuous with compact support.
\end{assumption}

\ifjournal \vspace{-2mm} \fi
\subsection{An integral operator and the SIDA-RKHS}\label{sec:measure}
\paragraph{The exploration measure.} We introduce first a probability measure that quantifies the exploration of the variable of $\phi$ by the data. Given data in \eqref{eq:data}, we define an empirical measure  \vspace{-2mm}
\begin{align}\label{eq:rho_conti}
\rho(dr) =  \frac{1}{ZN}\sum_{k=1}^N \int_\Omega\int_\Omega \delta(|y|-r)  \left| g[u_k](x,y) \right|dxdy,  
 \end{align}
 where $Z =\int_0^\infty \frac{1}{N}\sum_{k=1}^N \int_\Omega\int_\Omega \delta(|y|-r) \left| g[u_k](x,y) \right| dxdy dr $ is the normalizing constant. By definition, this measure reflects the weight being put by the loss function on $|y|$ through the data $\{g[u_k](x,y)\}_{k = 1}^N$. 

The exploration measure plays an important role in the learning of the function $\phi$. Its support is the region inside of which the learning process ought to work and outside of which we have limit information from the data to learn the function $\phi$. Thus, it defines a default function space of learning: $L^2(\rho)$.

\paragraph{An integral operator.} 
The loss functional's Fr\'echet derivative in $L^2(\rho)$ comes directly from the bilinear form  $\dbinnerp{\cdot,\cdot}$ in \eqref{eq:binlinearForm}. To see this, we rewrite the bilinear form as \ifjournal \vspace{-1mm} \fi
\begin{align}
\dbinnerp{\phi,\psi}= & \frac{1}{N}\sum_{k=1}^N \int \left[ \int \int \phi(|z|)  \psi(|y|) g[u_k](x,z)g[u_k](x,y) dydz \right] dx  \notag \\
 = &  \int_0^\infty \int_0^\infty \phi(r)\psi(s) G(r,s) drds =   \int_0^\infty \int_0^\infty \phi(r)\psi(s) \Gbar (r,s) \rho(dr)\rho(ds),  \label{eq:binlinearForm_int}
\end{align}
where the second-to-last equation follows from a change of order of integration and a change of variables to polar coordinates with the integral kernel $G$ given by \ifjournal \vspace{-1mm} \fi
\begin{equation}\label{eq:G}
G(r,s)= \frac{1}{N}\sum_{k=1}^N\int_{|\eta | =1} \int_{|\xi |=1}  \left[ \int  g[u_k](x,r\xi)  g[u_k](x,s\eta) dx \right]d\xi d\eta,
\end{equation}
for $ r,s\in \mathrm{supp}(\rho)$ and $G(r,s) =0$ otherwise. The last equality in \eqref{eq:binlinearForm_int} is a re-weighting by $\rho$ with \ifjournal \vspace{-1mm} \fi
\begin{equation}\label{eq:Gbar}
    \Gbar(r,s) = \frac{G(r,s)}{\rho(r)\rho(s)},
\end{equation} 
where, by an abuse of notation,  we also use $\rho(r)$ to denote the density of the probability measure $\rho$ defined in \eqref{eq:rho_conti}.

The next lemma shows that $\Gbar$ defines a positive semi-definite integral operator. Its proof, as well as proofs to later lemmas and theorems, are presented in Appendix \ref{sec:proofs}.\ifjournal \vspace{-1mm} \fi
\begin{lemma}[The integral operator]\label{lemma:Gbar}
Under Assumption {\rm\ref{assum:data}},  the integral kernel $\Gbar$ is positive semi-definite and the integral operator $\LGbar: L^2(\rho)\to L^2(\rho)$ \ifjournal \vspace{-2mm} \fi
\begin{equation}\label{eq:LG}
\LGbar \phi (r) =  \int_0^\infty  \phi(s) \overline{G}(r,s) \rho(s)ds     
\end{equation}
is compact and positive semi-definite. Further more, for any $\phi,\psi\in L^2(\rho)$, \ifjournal \vspace{-1mm} \fi
\begin{equation}\label{eq:dbinnerp_LG}
\dbinnerp{\phi,\psi} = \innerp{\LGbar \phi,\psi}_{L^2(\rho)};
\end{equation}
\end{lemma}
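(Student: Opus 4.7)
The plan is to prove the three claims in the order the paper presents them, by factoring everything through the linear map that sends a candidate kernel to its predicted outputs on the training inputs.

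First I would establish the operator identity \eqref{eq:dbinnerp_LG} by a direct rearrangement of \eqref{eq:binlinearForm_int}. Using Fubini to swap the order of integration, I get
\begin{equation*}
\dbinnerp{\phi,\psi} = \int_0^\infty \psi(r) \left[ \int_0^\infty \phi(s) \Gbar(r,s) \rho(s)\,ds \right] \rho(r)\,dr = \innerp{\LGbar \phi,\psi}_{L^2(\rho)}.
\end{equation*}
This reduces the remaining statements about $\LGbar$ to properties of the bilinear form $\dbinnerp{\cdot,\cdot}$, which in turn are controlled by the original definition \eqref{eq:binlinearForm}.

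For positive semi-definiteness of $\LGbar$, I would use the tautology $\dbinnerp{\phi,\phi} = \frac{1}{N}\sum_{k=1}^N \|R_\phi[u_k]\|_{\spaceY}^2 \ge 0$, which combined with \eqref{eq:dbinnerp_LG} yields $\innerp{\LGbar \phi,\phi}_{L^2(\rho)} \ge 0$ for every $\phi \in L^2(\rho)$. For PSD of the kernel $\Gbar$ itself, I would note that $G(r,s)$, as written in \eqref{eq:G}, is the Gram kernel of the vector-valued map $r \mapsto \bigl(|\xi|=1 \mapsto g[u_k](\cdot, r\xi)\bigr)_k$ in a suitable product $L^2$ space, and therefore positive semi-definite; dividing by $\rho(r)\rho(s)$ preserves the PSD property on $\supp(\rho)^2$.

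For compactness, I would avoid wrestling with the possibly unbounded factor $1/(\rho(r)\rho(s))$ in $\Gbar$ by factoring $\LGbar$ through the evaluation map. Define $T: L^2(\rho) \to \bigoplus_{k=1}^N \spaceY$ by $(T\phi)_k = N^{-1/2} R_\phi[u_k]$. Under Assumption~\ref{assum:data}, for each $k$ the function $g[u_k](x,y)$ is continuous with compact support in $x$ and $y$, so after changing to polar coordinates the map $\phi \mapsto R_\phi[u_k]$ is an integral operator with a square-integrable kernel relative to $dx \otimes \rho(ds)$, hence Hilbert--Schmidt and in particular compact. Therefore $T$ is compact, and by the identity $\innerp{T\phi, T\psi}_{\oplus \spaceY} = \dbinnerp{\phi,\psi} = \innerp{\LGbar \phi,\psi}_{L^2(\rho)}$ we obtain $\LGbar = T^*T$, which is compact (and self-adjoint and PSD, giving a second proof of the previous step).

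The main obstacle I anticipate is the bookkeeping with the re-weighting by $\rho$: one has to verify that the adjoint $T^*$ is taken with respect to the $L^2(\rho)$ inner product on the source, that $\Gbar$ is well-defined on $\supp(\rho)^2$, and that the Hilbert--Schmidt argument for $T$ does not silently require control of $1/\rho$ that the assumption does not provide. The factoring approach finesses this, since compactness of $T$ only needs square-integrability of the physical integrand $g[u_k]$, not any pointwise bound on $\Gbar$.
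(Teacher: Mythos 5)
Your proof is correct, and for the compactness claim it takes a genuinely different route from the paper. The identity \eqref{eq:dbinnerp_LG} and the positive semi-definiteness of $\Gbar$ are handled essentially as in the paper (the paper writes out your ``Gram kernel'' observation concretely, as a sum-of-squares computation $\sum_{i,j}c_ic_j\Gbar(r_i,r_j)=\frac1N\sum_k\int\int\int|\sum_i c_i g[u_k](x,r_i\xi)/\rho(r_i)|^2$). Your derivation of PSD of $\LGbar$ from $\dbinnerp{\phi,\phi}=\frac1N\sum_k\|R_\phi[u_k]\|^2_\spaceY\ge 0$ is cleaner and more direct than the paper's, which deduces it from PSD of the kernel $\Gbar$. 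For compactness the paper simply asserts that $\Gbar\in L^2(\rho\times\rho)$ (hence $\LGbar$ is Hilbert--Schmidt), ``which follows from the fact that each $u_k$ is bounded and the definition of $\rho$''; your factorization $\LGbar=T^*T$ with $T\phi=(N^{-1/2}R_\phi[u_k])_k$ is a nicer structural argument that also re-derives self-adjointness and positivity for free. One caveat: your closing claim that the factorization avoids any control of $1/\rho$ is too optimistic. Since the source space of $T$ is $L^2(\rho)$, the Hilbert--Schmidt norm of $\phi\mapsto R_\phi[u_k]$ is $\int\!\int |h_k(x,s)|^2\rho(s)^{-1}\,dx\,ds$ with $h_k(x,s)=\int_{|\xi|=1}g[u_k](x,s\xi)s^{d-1}d\xi$, so a factor $1/\rho$ reappears; it is tamed only because $\rho$ is by construction the normalized marginal of $|g[u_k]|$, giving $\int|h_k(x,s)|^2\,dx\lesssim \|g[u_k]\|_\infty\, s^{d-1}\,\rho(s)$ on the compact support. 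This is exactly the same cancellation the paper invokes implicitly for $\Gbar\in L^2(\rho\times\rho)$, so neither argument is more fragile than the other, but the step should be spelled out rather than declared unnecessary.
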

\ifarXiv
\begin{proof}[Proof of Lemma {\rm \ref{lemma:Gbar}}] 
Recall that a bi-variate function $\Gbar$ is positive semi-definite if for any $(c_1,\ldots,c_m)\in \R^m$ and any $\{r_j\}_{j=1}^m\subset \R^{d}$, the sum $\sum_{i=1}^m\sum_{j=1}^m c_ic_j\Gbar(r_i,r_j)\geq 0$. (see e.g. \cite{BCR84,Cucker2007,LLMTZ21}). Using \eqref{eq:G} and \eqref{eq:Gbar}, we have 
\begin{align*}
 \sum_{i=1}^m\sum_{j=1}^m c_ic_j\Gbar(r_i,r_j) 
= &  \frac{1}{N}\sum_{k=1}^N\int_{|\eta | =1} \int_{|\xi |=1}  \left[ \int \sum_{i=1}^m\sum_{j=1}^m c_ic_j\frac{g[u_k](x,r_i\xi)  g[u_k](x,r_j\eta) }{\rho(r_i)\rho(r_j)}  dx \right]d\xi d\eta \\
= & \frac{1}{N}\sum_{k=1}^N\int_{|\eta | =1} \int_{|\xi |=1}  \left[ \int \left|\sum_{i=1}^m c_i\frac{g[u_k](x,r_i\xi) }{\rho(r_i)}  \right|^2 dx\right]d\xi d\eta \geq 0.
\end{align*}
Thus $\Gbar$ is positive semi-definite. 
The operator $\LGbar$ is compact because  $\Gbar\in L^2(\rho\times \rho)$, which follows from the fact that each $u_k$ is bounded and the definition of $\rho$ (see also in \cite{LangLu21}). Also, since $\Gbar$ is positive semi-definite, so is $\LGbar$. The equation \eqref{eq:dbinnerp_LG} follows from \eqref{eq:binlinearForm_int}. 
\end{proof}
\fi

The next lemma provides an operator characterization of the RKHS with $\Gbar$ as the reproducing kernel \cite{aronszajn1950theory}. 
This RKHS is system(the operator $R_\phi$) intrinsic data adaptive (SIDA), and we refer it as SIDA-RKHS. 
It is the data adaptive RKHS in our DARTR.  
\begin{lemma}[The SIDA-RKHS] \label{lemma:sidaRKHS}
Assume Assumption {\rm\ref{assum:data}}. Then the following statements hold. 
\begin{itemize}\setlength\itemsep{-1mm}
\item[(a)] The RKHS $H_G$ with $\Gbar$ as the reproducing kernel satisfies $H_G = \LGbar^{1/2} (L^2(\rho) )$ and its inner product satisfies  $\innerp{\phi,\psi}_{H_G} = \innerp{\LGbar^{-1/2}\phi,\LGbar^{-1/2}\psi}_{L^2(\rho)}$ for any $\phi,\psi\in H_G$. 
\item[(b)] The eigen-functions of $\LGbar$, denoted by $\{\psi_i, \psi^0_j\}_{i,j}$ with $\{\psi_i\}$ corresponding to positive eigenvalues $\{\lambda_i\}$ in decreasing order and $\{\psi^0_j\}$ corresponding to zero eigenvalues (if any), 
form an orthonormal basis of $L^2(\rho)$ and $\lambda_i$ converges to $0$. Furthermore, for any $\phi= \sum_i c_i \psi_i$, we have 
\begin{equation}\label{eq:norms}
\dbinnerp{\phi,\phi} = \sum_i \lambda_i c_i^2, \quad \|\phi\|^2_{L^2(\rho)} = \sum_i  c_i^2, \quad  \|\phi\|^2_{H_G} = \sum_i \lambda_i^{-1} c_i^2, 
\end{equation}
where the last equation is restricted to $\phi\in H_G$. 
\item[(c)] For any $\phi\in L^2(\rho)$ and $\psi\in H_G$, we have  \ifjournal \vspace{-2mm} \fi
\begin{equation}\label{eq:innerp_HG}
\innerp{\phi,\psi}_{L^2(\rho)} = \innerp{\LGbar \phi,\psi}_{H_G}, \quad  \dbinnerp{\phi,\psi} = \innerp{\LGbar^2 \phi,\psi}_{H_G}. 
\end{equation}
\end{itemize}
\end{lemma}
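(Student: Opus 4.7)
The plan is to build everything on the spectral decomposition of the compact positive semi-definite operator $\LGbar$ established in Lemma~\ref{lemma:Gbar}, and then translate between the operator picture and the reproducing-kernel picture. First, for part~(b), I would invoke the spectral theorem for compact self-adjoint operators on $L^2(\rho)$: this yields an orthonormal basis of eigenfunctions $\{\psi_i\}\cup\{\psi_j^0\}$ with positive eigenvalues $\lambda_i$ accumulating only at $0$ and zero eigenvalues, respectively. Writing $\phi=\sum_i c_i\psi_i$ in this basis, Parseval gives $\|\phi\|_{L^2(\rho)}^2 = \sum_i c_i^2$, and combining $\LGbar\psi_i = \lambda_i\psi_i$ with \eqref{eq:dbinnerp_LG} yields $\dbinnerp{\phi,\phi} = \innerp{\LGbar\phi,\phi}_{L^2(\rho)} = \sum_i \lambda_i c_i^2$.

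Next, for part~(a), I would characterize $H_G$ via the Mercer-type expansion $\Gbar(r,s) = \sum_i \lambda_i \psi_i(r)\psi_i(s)$ (convergent in $L^2(\rho\otimes\rho)$, and pointwise on $\supp(\rho)$ when the data, and hence $\Gbar$, are continuous). The standard RKHS construction from a Mercer kernel then identifies $H_G = \bigl\{\phi=\sum_i c_i\psi_i : \sum_i \lambda_i^{-1} c_i^2 < \infty\bigr\}$ with $\|\phi\|_{H_G}^2 = \sum_i \lambda_i^{-1} c_i^2$, which is the third formula in \eqref{eq:norms}. Setting $f = \sum_i \lambda_i^{-1/2} c_i \psi_i \in L^2(\rho)$, one has $\phi = \LGbar^{1/2} f$, establishing $H_G = \LGbar^{1/2}(L^2(\rho))$; the inner-product formula $\innerp{\phi,\psi}_{H_G} = \innerp{\LGbar^{-1/2}\phi,\LGbar^{-1/2}\psi}_{L^2(\rho)}$ follows immediately, with $\LGbar^{-1/2}$ interpreted as the pseudoinverse on the closed range of $\LGbar^{1/2}$.

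Part~(c) is then a short algebraic consequence of (a) and (b). Expanding $\phi=\sum_i c_i\psi_i\in L^2(\rho)$ and $\psi=\sum_i d_i\psi_i\in H_G$ (only the $\lambda_i>0$ modes contribute to $\psi$), one computes $\innerp{\LGbar\phi,\psi}_{H_G} = \sum_i \lambda_i^{-1}(\lambda_i c_i) d_i = \sum_i c_i d_i = \innerp{\phi,\psi}_{L^2(\rho)}$, and similarly $\innerp{\LGbar^2\phi,\psi}_{H_G} = \sum_i \lambda_i^{-1}(\lambda_i^2 c_i) d_i = \sum_i \lambda_i c_i d_i = \dbinnerp{\phi,\psi}$ by \eqref{eq:dbinnerp_LG}.

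The main obstacle I anticipate is the Mercer step: classical Mercer requires continuity of the kernel on a compact set, and here $\rho$ may be singular or supported on a proper subset of $[0,\infty)$, so one has to be careful about what domain the kernel lives on. The cleanest route is to view $\Gbar$ as an element of $L^2(\rho\otimes\rho)$ (as already noted in Lemma~\ref{lemma:Gbar}), use the Hilbert--Schmidt / singular value decomposition to get the series expansion in that space, and verify the RKHS identification through finite-rank truncations and a density argument rather than through pointwise convergence of the Mercer series. Once that is set up, the rest of the proof is standard spectral bookkeeping.
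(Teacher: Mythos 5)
Your proposal is correct and follows essentially the same route as the paper: both rest on the spectral theorem for the compact positive semi-definite operator $\LGbar$ from Lemma \ref{lemma:Gbar}, the operator characterization $H_G = \LGbar^{1/2}(L^2(\rho))$, and the identity $\dbinnerp{\phi,\psi}=\innerp{\LGbar\phi,\psi}_{L^2(\rho)}$. The only differences are cosmetic: you derive part (a) via a Mercer/Hilbert--Schmidt expansion where the paper simply cites it as the standard characterization from Cucker--Smale, and you verify part (c) by coordinates in the eigenbasis where the paper uses self-adjointness of $\LGbar^{-1/2}$ --- the same computation in different notation.
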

\ifarXiv
\begin{proof}[Proof of Lemma {\rm \ref{lemma:sidaRKHS}}] 
Part (a) is a standard operator characterization of the RKHS $H_G$ (see e.g., \cite[Section 4.4]{Cucker2007}). 

For Part (b), since the operator $\LGbar$ is symmetric positive semi-definite and compact as shown in Lemma \ref{lemma:Gbar}, the eigenfunctions are orthonormal and the eigenvalues decay to zero. The first equation in \eqref{eq:norms} follows from \eqref{eq:dbinnerp_LG} and the second equation follows from the orthonormality of the eigenfunctions. At last, if $\phi\in H_G$, by the characterization of the inner product of $H_G$ in Part (a), we have the third equation in \eqref{eq:norms}. 

The first equality in Part (c) follows from Part (a) and that $\LGbar^{-1/2}$ is self-adjoint, which implies that $ \innerp{\LGbar \phi,\psi}_{H_G} =\innerp{\LGbar^{1/2}\phi, \LGbar^{-1/2}\psi}_{L^2(\rho)} =  \innerp{\phi,\psi}_{L^2(\rho)} $.  The second equality in \eqref{eq:innerp_HG} follows from the first equality and \eqref{eq:dbinnerp_LG}. 
\end{proof}
\fi

\begin{remark}\label{rmk:discreteFrechet}
The space $L^2(\rho)$ can be a discrete vector space with the function $\phi$ defined only on finitely many points $\{r_i\}_{i=1}^n$ that are explored by the data. In this setting, the integral kernel $G$ in {\rm\eqref{eq:G}} becomes a positive semi-definite matrix in $\R^n$, so does $\Gbar$ in {\rm\eqref{eq:Gbar}}. Now the integral operator $\LGbar$ is defined by the matrix $\Gbar$ on the weighted vector space $\R^n$ and its eigenvalues are the generalized eigenvalues of $(G,B)$ with $B = \mathrm{Diag}(\rho(r_1),\cdots, \rho(r_n))$. As a result, the SIDA-RKHS $H_G$ is the vector space spanned by the eigenvectors with nonzero eigenvalues. Furthermore, the norms in {\rm \eqref{eq:norms}} can be computed directly from the eigen-decomposition.  These discrete values can be viewed piecewise constant approximations to the functions, and the numerical algorithm in Section {\rm\ref{sec:alg}} applies. When $n\to \infty$, they converge to the corresponding functions under suitable regularity conditions. Thus, the measure $\rho$ allows for a unified framework to treat the SIDA-RKHS with either discrete or continuous functions. 
\end{remark}

\ifjournal \vspace{-3mm} \fi
\subsection{Function space of identifiability and regularizations}\label{sec:Identifiability}
We show that the function space of identifiability, i.e., on which the loss functional has a unique minimizer (see Definition \ref{def:spaceID}), is the subspace of $L^2(\rho)$ spanned by the eigenfunctions of $\LGbar$ with positive eigenvalues. When zero is an eigenvalue of $\LGbar$, this function space is a proper subspace of $L^2(\rho)$ and the loss functional has multiple minimizers in $L^2(\rho)$. Thus, the inverse problem is well-defined only on this function space. Furthermore, we show that the regularization by the SIDA-RKHS norm enforces the regularized minimizer to be in it.
\ifjournal \vspace{-1mm} \fi
\begin{definition}\label{def:spaceID}
The function space of identifiability is the largest subspace of $L^2(\rho)$ in which true function $\phi_{true}$ is the unique minimizer of the loss functional $\calE$ with continuous noiseless data.  
\end{definition}\ifjournal \vspace{-1mm} \fi

 The next theorem characterizes the function space of identifiability. Furthermore, it shows that this inverse problem is ill-posed since the estimator requires the inverse of a compact operator. 

\begin{theorem}[Function space of identifiabilty]\label{thm:D_lossFnL2}  
Suppose that Assumption {\rm\ref{assum:data}} holds. Let  $\phi_N^f \in L^2(\rho)$ be the Riesz representation of the bounded linear functional: \ifjournal \vspace{-2mm}\fi
\begin{equation}\label{eq:phi_f_N}  
\innerp{\phi_N^f,\psi}_{L^2(\rho)} = \frac{1}{N}\sum_{k=1}^N \int 2 R_\psi[u_k](x) f_k(x)dx, \, \forall \psi\in L^2(\rho).
\end{equation}
Then the following statements hold. 
\begin{itemize}\setlength\itemsep{-1mm}
\item[(a)] The Fr\'echet derivative of $\calE(\phi)$ in $L^2(\rho)$ is $ \nabla \calE(\phi) =  2 ( \LGbar \phi - \phi_N^f)$. 
\item[(b)] The function space of identifiability 
 is $H
= \overline{ \mathrm{span}\{\psi_i\}}$ with closure in $L^2(\rho)$, where $\{\psi_i\}$ are eigenfunctions of $\LGbar$ with positive eigenvalues. 
Furthermore, the minimizer of $\calE(\phi)$ in $H$ is $\widehat{\phi} = \LGbar^{-1}\phi_N^f$ if $\phi_N^f\in \LGbar(L^2(\rho))$. In particular, if the data is perfect and generated from a true function $\phi_{true}$, we have $\phi_N^f= \LGbar\phi_{true}$ and $\widehat{\phi} = \LGbar^{-1}\phi_N^f = \phi_{true}$. 
\item[(c)]  The Fr\'echet derivative of $\calE$ in $H_G$ is 
$ \nabla^{H_G} \calE(\phi) =  2 ( \LGbar^2 \phi -\LGbar \phi_N^f)$. Its zero leads to another estimator $\widehat{\phi} = \LGbar^{-2}\LGbar\phi_N^f$ if $\phi_N^f\in \LGbar(L^2(\rho))$. 
\end{itemize} 
\end{theorem}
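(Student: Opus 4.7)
The plan is to handle the three parts in order, reducing each Fréchet-derivative computation to a single application of the Riesz representation theorem combined with the operator identities established in Lemmas 3.2 and 3.3.

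For part (a), I would rewrite the loss functional in purely Hilbertian form by replacing $\dbinnerp{\phi,\phi}$ with $\innerp{\LGbar \phi,\phi}_{L^2(\rho)}$ from Lemma 3.2 and recognizing the linear cross term as $\innerp{\phi_N^f, \phi}_{L^2(\rho)}$ by the defining Riesz identity. Expanding $\calE(\phi + t\psi)$ to first order in $t$ and using the self-adjointness of $\LGbar$ then yields the stated gradient. The only preliminary check is that the linear functional $\psi \mapsto \frac{1}{N}\sum_k \innerp{R_\psi[u_k], f_k}_{\spaceY}$ is bounded on $L^2(\rho)$, which follows from Assumption 3.1 and the Cauchy--Schwarz inequality applied to $\dbinnerp{\cdot,\cdot}$, so that the Riesz representative $\phi_N^f$ indeed exists.

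For part (b), I would invoke the spectral decomposition of the compact self-adjoint operator $\LGbar$ from Lemma 3.3(b) to split $L^2(\rho) = H \oplus H^\perp$, where $H = \overline{\mathrm{span}\{\psi_i\}}$ corresponds to the positive eigenspaces and $H^\perp = \overline{\mathrm{span}\{\psi_j^0\}}$ is the kernel of $\LGbar$. Uniqueness on $H$ is then immediate from part (a): any two minimizers in $H$ differ by an element of $H \cap \mathrm{Null}(\LGbar) = \{0\}$. To establish maximality, I would verify that $\calE$ is invariant under translations by any $\psi^0 \in H^\perp$, since the quadratic term vanishes by $\LGbar \psi^0 = 0$ and the linear term vanishes because $\phi_N^f$ lies in the closed range of $\LGbar$, which is orthogonal to $\mathrm{Null}(\LGbar)$. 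The closed-form minimizer $\widehat\phi = \LGbar^{-1}\phi_N^f$ follows by solving the normal equation $\LGbar \phi = \phi_N^f$ on $H$ under the stated range hypothesis, and the perfect-data identity $\phi_N^f = \LGbar \phi_{true}$ is a direct substitution of $f_k = R_{\phi_{true}}[u_k]$ into the defining bilinear form.

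Part (c) is a parallel calculation carried out in the inner product of $H_G$: by Lemma 3.3(c), the quadratic term contributes $\dbinnerp{\phi,\psi} = \innerp{\LGbar^2 \phi, \psi}_{H_G}$ and the cross term converts via $\innerp{\phi_N^f, \psi}_{L^2(\rho)} = \innerp{\LGbar \phi_N^f, \psi}_{H_G}$, giving the stated $H_G$-gradient; setting it to zero produces $\LGbar^2 \widehat\phi = \LGbar \phi_N^f$, solved by $\widehat\phi = \LGbar^{-2}\LGbar \phi_N^f$ under the same range assumption. The main obstacle in the whole argument is the maximality claim in part (b): injectivity of $\LGbar$ on $H$ is immediate, but ruling out identifiability on any strictly larger subspace requires the orthogonality $\phi_N^f \perp \mathrm{Null}(\LGbar)$, which itself rests on the self-adjointness of $\LGbar$ and the fact that the data-generated linear functional factors through the range of the operators $R_\cdot[u_k]$.
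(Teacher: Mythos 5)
Your proposal is correct and follows essentially the same route as the paper: part (a) by rewriting $\calE$ via $\dbinnerp{\phi,\phi}=\innerp{\LGbar\phi,\phi}_{L^2(\rho)}$ and the Riesz representative, part (b) by the spectral splitting of $L^2(\rho)$ into $H$ and the null space of $\LGbar$ together with the invariance $\calE(\phi_{true}+\phi^0)=\calE(\phi_{true})$, and part (c) by the parallel computation in the $H_G$ inner product using Lemma \ref{lemma:sidaRKHS}(c). Your explicit verification that the linear functional defining $\phi_N^f$ is bounded, and your spelled-out orthogonality argument for the maximality claim, are slightly more detailed than the paper's but not a different method.
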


\begin{proof}[Proof of Theorem {\rm \ref{thm:D_lossFnL2}}] 
From \eqref{eq:dbinnerp_LG}, we can write the loss functional in \eqref{eq:lossFn2} as 
\[ \calE(\phi) = \innerp{\LGbar \phi,\phi}_{L^2(\rho)} -2 \innerp{\phi_N^f,\phi}_{L^2(\rho)}+C_N^f.  \]
Then we can compute the Fr\'echet derivative directly from definition, and Part (a) follows. 

For Part (b), first note that for any $\phi_N^f\in \LGbar(L^2(\rho))$, the estimator $\widehat{\phi} = \LGbar^{-1}\phi_N^f$ is the unique zero of the loss functional's Fr\'echet derivative in $H$, hence it is the unique minimizer of $\calE(\phi)$ in $H$. In particular, when the perfect data is generated from $\phi_{true}$, i.e. $R_{\phi_{true}}[u_k]=f_k$, by \eqref{eq:dbinnerp_LG} and the definition of the bilinear form  $\dbinnerp{\cdot,\cdot}$ in \eqref{eq:binlinearForm}, we have 
\[
\innerp{\phi_N^f,\psi}_{L^2(\rho)}= \innerp{\LGbar\phi_{true},\psi}_{L^2(\rho)}
\]
for any $\psi\in L^2(\rho)$. Thus, $\phi_N^f= \LGbar\phi_{true}$ and $\widehat{\phi} = \LGbar^{-1}\phi_N^f = \phi_{true}$. That is, $\phi_{true}\in H$ is the unique minimizer of the loss functional $\calE$ for perfect data. Meanwhile, note that $H$ is the orthogonal complement of the null space of $\LGbar$, and $\calE(\phi_{true}+\phi^0) = \calE(\phi_{true})$ for any $\phi^0$ such that $\LGbar \phi^0=0$.  Thus, $H$ is the largest such function space, and we conclude that  $H$ is the function space of identifiability. 

To prove Part (c), we further re-write the loss functional as 
\begin{align*}
\calE(\phi) 
= &   \innerp{\LGbar \phi,\LGbar\phi}_{H_G} -2 \innerp{\LGbar^{1/2}\phi_N^f,\LGbar^{1/2}\phi}_{H_G}+C_N^f,  
\end{align*}
which follows from \eqref{eq:innerp_HG} and the definition of $\innerp{\cdot,\cdot}_{H_G}$. Thus, by definition, the Fr\'echet derivative of $\calE(\phi)$ in the direction of $\psi\in H_G$ is 
\begin{align*}
\innerp{\nabla^{H_G} \calE(\phi), \psi}_{H_G} & = \lim_{\epsilon\to 0}\frac{1}{\epsilon}[ \calE(\phi+\epsilon \psi ) -  \calE(\phi)]  \\
& =  2\innerp{\LGbar \phi,\LGbar \psi }_{H_G} - 2\innerp{\LGbar^{1/2} \phi_N^f, \LGbar^{1/2}\psi}_{H_G}\\ 
& =  2\innerp{\LGbar^2 \phi - \LGbar\phi_N^f,\psi}_{H_G},
\end{align*}
which gives the  Fr\'echet derivative $\nabla^{H_G} \calE(\phi)$. 
\end{proof}

\begin{remark}[Regularization with the $L^2$ and the SIDA-RKHS norms]\label{thm:regRKHS} In practice, due to the discrete and/or noisy data, we often have $\phi_N^f = \LGbar \phi_{true} + \phi_1^\delta + \phi_{2}^\delta$, where the perturbation from the true function is decomposed to $\phi_1^\delta\in \LGbar(L^2(\rho))$ and $\phi^\delta_2\in \LGbar(L^2(\rho))^\perp$. Clearly, when $\phi^\delta_2\neq 0$, the estimator $\widehat \phi = \LGbar^{-1}\phi_N^f$ does not exist and regularization is necessary. Next, we compare the $L^2$ and the SIDA-RKHS regularizers, i.e., consider 
the regularized loss functional with $\calR(\phi)$ being $\lambda \|\phi\|_{L^2}^2 $ and $\lambda \|\phi\|_{H_G}^2 $. 
Then, their minimizers are \vspace{-2mm}
\[
\widehat \phi_\lambda^{L^2}= (\LGbar + \lambda I)^{-1} \phi_N^f,\quad \widehat \phi_\lambda^{H_G}= (\LGbar^2 + \lambda I)^{-1} \LGbar\phi_N^f.
\] 
Plugging in $\phi_N^f = \LGbar \phi_{true} + \phi_1^\delta + \phi_{2}^\delta$, we have  \vspace{-2mm}
\begin{align*}
\widehat \phi_\lambda^{L^2} &=\phi_{true} + (\LGbar + \lambda I)^{-1} (\phi_1^\delta -\lambda \phi_{true} + \phi_{2}^\delta),\\
 \widehat \phi_\lambda^{H_G}&= \phi_{true} + (\LGbar^2 + \lambda I)^{-1} (\LGbar\phi_1^\delta -\lambda \phi_{true}).
\end{align*}
A regularizer then selects the optimal $\lambda$ to balance the errors,  
\begin{align*}
\| \widehat \phi_\lambda^{L^2}- \phi_{true}  \|^2_{L^2(\rho)} &\leq \|(\LGbar + \lambda I)^{-1} (\phi_1^\delta+ \phi_2^\delta)\|^2 + \|(\LGbar + \lambda I)^{-1}\lambda \phi_{true} \|^2,\\
\| \widehat \phi_\lambda^{H_G}- \phi_{true} \|^2_{L^2(\rho)} &\leq \|(\LGbar^2 + \lambda I)^{-1}\LGbar\phi_1^\delta\|^2 + \|(\LGbar^2 + \lambda I)^{-1}\lambda \phi_{true}\|^2,
\end{align*}
where in each of them, the first term on the right hand side requires a large $\lambda$, whereas the second term requires a small $\lambda$. 
In practice, the errors $\phi_i^\delta$ are much smaller than $\phi_{true}$, and the optimal $\lambda$ should be small so that the second term is negligible. In this case, the bias in $\widehat \phi_\lambda^{L^2} $ is about $\LGbar^{-1}(\phi_1^\delta)+\lambda^{-1}\phi_2^\delta$, whereas the bias in $\widehat \phi_\lambda^{H_G}$ is about $\LGbar^{-1}(\phi_1^\delta)$. Thus, the SIDA-RKHS regularized estimator $\widehat \phi_\lambda^{H_G}$ is more accurate than the $L^2$ regularized estimator. To avoid amplifying the error $\phi_2^\delta$, a projection is necessary for the $L^2$ regularizer, and we will compare the projected $L^2$ regularizer with the SIDA-RKHS regularizer in Section {\rm\ref{sec:projectedl2L2}}.  
\end{remark}

\section{Learning algorithm}\label{sec:dartr}
\ifjournal \vspace{-2mm} \fi
\subsection{Algorithm: nonparametric regression with DARTR}\label{sec:alg}
Based on the identifiability theory in Section \ref{sec:Identifiability}, we introduce next a nonparametric learning algorithm with Data Adaptive RKHS Tikhonov Regularization (DARTR). 
We briefly sketch the algorithm in the following four steps, whose details are presented in Appendix \ref{appendix:algorithm}. \vspace{-2mm}\begin{enumerate}[leftmargin=*]\setlength\itemsep{-0.2mm}
\item Estimate the exploration measure $\rho$. We utilize the data to estimate the support of the true kernel and the exploration measure $\rho$. The support of the true kernel lies in $[0,R_0]$ with $R_0$ being the diameter of the domain $\Omega$, and it is further confined from a comparison between the supports of $f_k$ and $g[u_k]$ (see Appendix \ref{appendix:algorithm} for more details). Then, we constrain the discrete approximation of $\rho$ defined \eqref{eq:rho_conti} on the support of $\phi$. In this process, we also assemble the regression data that will be repeatedly used. 
\item Assemble the regression matrices and vectors. We select a class of hypothesis spaces $\mH_n =\mathrm{span}\{\phi_i\}_{i=1}^n$ with basis functions $\{\phi_i\}$ and with dimension $n$ in a proper range. Then, we compute the regression normal matrices and vectors, as well as the basis matrix, 
\begin{equation}\label{eq:AbB1}
\begin{aligned}
\Abar_n(i,j) & = \dbinnerp{\phi_i,\phi_j}, \quad \bbar_n(i) = \innerp{\phi_N^f,\phi_i}_{L^2(\rho)}, \quad  B_n(i,j) = \innerp{\phi_i,\phi_j}_{L^2(\rho)} .  
\end{aligned}
\end{equation}
 from data for each of these hypothesis spaces. 
\item For each triplet $(\Abar_n,\bbar_n,B_n)$, find the best regularized estimator $\widehat c_{\lambda_n}$ by DARTR in Algorithm \ref{alg:dartr}, as well as corresponding loss value $\calE(\widehat c_{\lambda_n})$.  
\item From the estimators $\{\widehat c_{\lambda_n}\}_n$, we select the one 
 with the smallest loss value $\calE(\widehat c_{\lambda_n})$. 
\end{enumerate}

\begin{algorithm}[H]
{\small
\begin{algorithmic}[1]
\Require{The regression triplet $(\Abar, \bbar, B)$ consisting of normal matrix $\Abar$, vector $\bbar$ and basis matrix $B$ as in \eqref{eq:AbB1}.}
\Ensure{SIDA-RKHS regularized estimator $\widehat c_{\lambda_0}$ and loss value $\calE(\widehat c_{\lambda_0})$. }
	 \State Solve the generalized eigenvalue problem $\Abar V =  B V\Lambda$, where $\Lambda$ is the diagonal matrix of eigenvalues and the matrix $V$ has columns being eigenvectors orthonormal in the sense that $V^\top B V = I$.
	 \State Compute the RKHS-norm matrix $B_{rkhs} = (V\Lambda V^\top)^{-1}$, using pseudo inverse when $\Lambda$ is singular. We refer to Remark \ref{rmk:Brkhs} on a computational technique to avoid the inverse matrix.  
	\State Use the L-curve method to find an optimal estimator $\widehat c_{\lambda_0}$: select $\lambda_0$ maximizing the curvature of the $\lambda$-curve $(\log \calE( \widehat c_\lambda), \log(\widehat c_\lambda^\top B_{rkhs} \widehat c_\lambda ))$, where the least squares estimator $\widehat c_\lambda = (\Abar+ \lambda B_{rkhs})^{-1}\bbar$  minimizes the regularized loss function  \vspace{-1mm}
	\[
\calE_\lambda (c) =\calE(c)   + \lambda c^\top B_{rkhs} c \, \, \text{ with } \, \calE(c) =  c^\top\Abar c -2c^\top \bbar +  \bbar^\top \Abar^{-1} \bbar,
\]
where the matrix inversion is a pseudo-inverse when it is singular. 
\end{algorithmic}
\caption{Data Adaptive RKHS Regularization (DARTR). 
 }\label{alg:dartr}
}
\end{algorithm}\vspace{-2mm}
In comparison to the classical nonparametric regression using $(\Abar_n,\bbar_n)$, we need only an additional basis matrix $B_n$. The novelty of our algorithm is the data adaptive components, such as the exploration measure $\rho$, the basis matrix $B_n$ in $L^2(\rho)$ and the norm of the SIDA-RKHS for regularization. The computation of the SIDA-RKHS norm is based on the generalized eigenvalues problem with the pair $(\Abar_n,B_n)$, whose eigenvalues approximate the eigenvalues of $\LGbar$ in \eqref{eq:LG} and $\widehat\psi_k = V_{jk}\phi_j$ approximates the eigenfunctions of $\LGbar$ (see Theorem \ref{thm:LG_A}). The additional computational cost is only the generalized eigenvalue problem which can be solved efficiently.

\begin{theorem}\label{thm:LG_A}
Let $\mH_n = \mathrm{span}\{\phi_i\}_{i=1}^n \subset L^2(\rho)$ and let $(\Abar_n,B_n)$ be the normal and basis matrix in \eqref{eq:AbB1}. Assume that $\mH_n$ is large enough so that $\LGbar(L^2(\rho)) \subset \mH_n$ (which is true, for example 
when $\rho$ is a discrete-measure on a discrete set $\mathcal{R}$ and $\{\phi_n\}$ are piecewise constant functions with $n= |\mathcal{R}|$).  Then, the operator $\LGbar$ in \eqref{eq:LG} has eigenvalues $(\lambda_1,\ldots,\lambda_n)$ solved by the generalize eigenvalue problem 
\begin{equation}\label{eq:gEigenP}
\Abar_n V=  B_n\Lambda V, \quad s.t., V^\top B_n V = I_n, \quad \Lambda= \mathrm{Diag}(\lambda_1,\ldots,\lambda_n),
\end{equation}
and the corresponding eigenfunctions of $\LGbar$ are  $\{\psi_k = \sum_{j=1}^n  V_{jk}\phi_j\}$.
\end{theorem}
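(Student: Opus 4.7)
The plan is to reduce the infinite-dimensional eigenvalue problem for $\LGbar$ on $L^2(\rho)$ to a matrix problem on the coordinate space $\R^n$ of $\mH_n$, using the assumption $\LGbar(L^2(\rho)) \subset \mH_n$ to guarantee that every eigenfunction with nonzero eigenvalue must lie in $\mH_n$.

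First I would rewrite the normal matrix entries using Lemma \ref{lemma:Gbar}: equation \eqref{eq:dbinnerp_LG} gives $\Abar_n(i,j) = \dbinnerp{\phi_i,\phi_j} = \innerp{\LGbar \phi_j, \phi_i}_{L^2(\rho)}$, while by definition $B_n(i,j) = \innerp{\phi_j,\phi_i}_{L^2(\rho)}$. Since $\{\phi_i\}_{i=1}^n$ is a basis of $\mH_n$, $B_n$ is a Gram matrix and hence positive definite (in particular invertible). Under the hypothesis $\LGbar(L^2(\rho)) \subset \mH_n$, each $\LGbar \phi_j$ lies in $\mH_n$ and can be expanded uniquely as $\LGbar \phi_j = \sum_{k=1}^n M_{kj} \phi_k$. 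Taking $L^2(\rho)$-inner products with $\phi_i$ yields $\Abar_n(i,j) = (B_n M)_{ij}$, i.e. $M = B_n^{-1}\Abar_n$. Thus the matrix representation of $\LGbar$ restricted to $\mH_n$ in the basis $\{\phi_i\}$ is precisely $B_n^{-1}\Abar_n$.

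Next I would show that the eigenpairs of $\LGbar$ on $L^2(\rho)$ corresponding to nonzero eigenvalues are exactly the eigenpairs of $M$. For any $v \in \R^n$ with $\psi := \sum_j v_j \phi_j$, the identity $M v = \lambda v$ is equivalent to $\LGbar \psi = \lambda \psi$ (both sides belong to $\mH_n$ and their basis coordinates match); equivalently, multiplying through by $B_n$, to the generalized problem $\Abar_n v = \lambda B_n v$. Conversely, if $\LGbar \Psi = \lambda \Psi$ on $L^2(\rho)$ with $\lambda \ne 0$, then $\Psi = \lambda^{-1}\LGbar \Psi \in \LGbar(L^2(\rho)) \subset \mH_n$, so $\Psi$ is captured by this finite-dimensional reduction. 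This also handles the zero eigenvalues among $\lambda_1,\dots,\lambda_n$: they correspond to directions in $\mH_n$ that lie in the kernel of $\LGbar$ restricted to $\mH_n$, consistent with the full spectral picture of Lemma \ref{lemma:sidaRKHS}(b).

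For the orthonormality, I would translate $V^\top B_n V = I_n$ into inner products of the lifted functions $\psi_k = \sum_j V_{jk}\phi_j$:
\begin{equation*}
\innerp{\psi_k,\psi_\ell}_{L^2(\rho)} = \sum_{i,j} V_{ik} V_{j\ell} \innerp{\phi_i,\phi_j}_{L^2(\rho)} = (V^\top B_n V)_{k\ell} = \delta_{k\ell},
\end{equation*}
so the $\{\psi_k\}$ are $L^2(\rho)$-orthonormal as required. The one subtlety, and the main obstacle to articulate cleanly, is keeping the two spectral problems aligned: the generalized matrix eigenproblem is symmetric with respect to the inner product $\innerp{\cdot,\cdot}_{B_n}$ on $\R^n$ (since $\Abar_n$ and $B_n$ are symmetric and $B_n$ is positive definite), which precisely mirrors the fact that $\LGbar$ is self-adjoint on $L^2(\rho)$; this symmetry is what guarantees a $B_n$-orthonormal eigenbasis $V$ exists and that the lifted $\{\psi_k\}$ are $L^2(\rho)$-orthonormal, closing the argument.
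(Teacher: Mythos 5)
Your proposal is correct and follows essentially the same route as the paper's proof: both identify $\Abar_n(i,j)=\innerp{\phi_i,\LGbar\phi_j}_{L^2(\rho)}$ via Lemma \ref{lemma:Gbar} and reduce the eigenfunction equation to the generalized eigenproblem $\Abar_n v=\lambda B_n v$ by testing against the basis, using $\LGbar(L^2(\rho))\subset\mH_n$ to justify the equivalence. You are in fact somewhat more thorough than the paper, which leaves implicit the converse direction (that every eigenfunction of $\LGbar$ with nonzero eigenvalue lies in $\mH_n$ and is therefore captured) and the $L^2(\rho)$-orthonormality of the lifted eigenfunctions.
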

\ifarXiv
\begin{proof}[Proof of Theorem \ref{thm:LG_A}] Let $\psi_k = \sum_{j=1}^n  V_{jk}\phi_j$ with $V^\top B_n V = I_n$.  Then, $\psi_k$ is an eigenfunction of $\LGbar$ with eigenvalue $\lambda_k$ if and only if for each $i$, \vspace{-3mm} 
\begin{align*}
 \innerp{\phi_i, \lambda_k \psi_k}_{L^2(\rho)} = \innerp{\phi_i, \LGbar \psi_k}_{L^2(\rho)} =\sum_{j=1}^n \innerp{\phi_i,\LGbar \phi_j}_{L^2(\rho)}  V_{jk}= \sum_{j=1}^n\Abar_n(i,j) V_{jk},\vspace{-2mm} 
\end{align*}
where the last equality follows from the definition of $\Abar_n$ in \eqref{eq:AbB1}. 
Meanwhile, by the definition of $B_n$ we have $ \innerp{\phi_i, \lambda_k \psi_k}_{L^2(\rho)} = \sum_{j=1}^n B_n(i,j) \lambda_k V_{jk}$ for each $i$. Then, Equation \eqref{eq:gEigenP} follows. 
\end{proof}
\fi

\ifjournal \vspace{-3mm} \fi
\subsection{Comparison with projected $l^2$ and $L^2$ regularizers}\label{sec:projectedl2L2}
Our DARTR method differs from other regularizers in its use of the SIDA-RKHS norm, which restricts the function to be in the function space of identifiability. In the following, we compare it with the $l^2$ and $L^2$ regularizers that apply regularization terms $\calR(\phi) = \|\phi\|_{l^2}^2 = \sum_i c_i^2$ or  $\calR(\phi) = \|\phi\|_{L^2}^2 =  c^\top B_n c$. In fact, a direct application of these two regularization terms would lead to problematic regularizers with largely biased estimators when $\Abar_n$ is singular, i.e., when the function space of identifiability is a proper subspace of $L^2(\rho)$, because the inverse problem is ill-defined on $L^2(\rho)$. Thus, in practice, one makes a projection to the function space of identifiability (i.e., the image of $\Abar_n$ in computation) before adding these regularization terms, and we call them projected  $l^2$ and $L^2$ regularizers.


\begin{table}[H] \vspace{-4mm}
{\small
\centering
\caption{The SIDA-RKHS regularizer v.s.~the projected $l^2$,$L^2$ regularizers$^*$.  }\label{tab:l2L2rkhs}
     \begin{tabular}{l | c  |  c |  c} \hline
 \hline
              &   $l^2$ & $L^2$ & SIDA-RKHS \\   
\hline \\
&  \multicolumn{1}{c|}{ } &  \multicolumn{1}{c|}{ } &  \\ [-8mm]
   $\mathcal{R}(\phi)$ &   $\|c\|^2 = c^\top c$ & $\|c\|_{B_n}^2 = c^\top B_n c$ & $\|c\|_{H_G}^2 = c^\top B_{rkhs} c$ \\ 
& & &  \\[-2mm]
 $c_\lambda$ & $c_\lambda = \sum_{i=1}^k \frac{1}{\sigma_i+\lambda} u_i^\top \bbar$ & $c_\lambda = \sum_{i=1}^k \frac{1}{\lambda_i+\lambda} v_i^\top \bbar$ & $c_\lambda = \sum_{i=1}^k \frac{1}{\lambda_i+\lambda \lambda_i^{-1} } v_i^\top \bbar$ \\ 
& &  \multicolumn{2}{c}{ }  \\[-1mm]
SVD & $\Abar_n  = \sum_{i=1}^n \sigma_i u_i u_i^\top$,\,$u_i^\top u_j = \delta_{ij}$   &      \multicolumn{2}{c}{ $\Abar_n  = \sum_{i=1}^n \lambda_i v_i v_i^\top$, \,  $v_i^\top B_n v_j = \delta_{ij}$ } \\ [1mm]
      & $U^\top \Abar_n U = \Sigma$, $U^\top U = I$   &   \multicolumn{2}{c}{ $V^\top \Abar_n V = \Lambda$, $V^\top B_n V = I$ }\\ 
 \hline
\end{tabular}\\
*All regularizers estimate $\phi=\sum_{i=1}^n c_i\phi_i$ from $\Abar_n c=\bbar_n$ with basis matrix $B_n$ (see \eqref{eq:AbB1}). The projected $l^2$ and $L^2$ regularizers use only the non-zero eigenvalues  $\{\sigma_i\}_{i=1}^k$ and $\{\lambda_i\}_{i=1}^k$ and their eigenvectors. 
}\end{table}

Table \ref{tab:l2L2rkhs} compares our SIDA-RKHS regularizer with the projected $l^2$ and $L^2$ regularizers.  We note that there are the following connections: \vspace{-2mm}
\begin{itemize}[leftmargin=*]\setlength\itemsep{-1mm}
\item The $L^2$ regularizer is a basis-adaptive generalization of the $l^2$ regularizer. When $B_n=I$ (i.e., the basis $\{\phi_i\}$ are orthonormal in $L^2(\rho)$),  the two are the same. 
When $B_n$ is not the identity matrix (i.e., the basis $\{\phi_i\}$ are not orthonormal in $L^2(\rho)$), which happens often, the $L^2$ regularizer takes it into account through the generalized eigenvalue problem. 
\item The SIDA-RKHS regularizer is an improvement over the $L^2$ regularizer. When all the generalized eigenvalues are $\lambda_i\equiv 1$ (e.g., $\LGbar$ is an identity operator or when $\Abar_n= B_n$ as in classical regression), the two are the same. Otherwise, the SIDA-RKHS regularizer not only takes into account of the function space of identifiability but also a balance between $\lambda_i$ and $\lambda_i^{-1}$.   
\item The SIDA-RKHS regularizer restricts the learning to be in the function space of identifiability by definition, while the other two regularizers, if not projected, miss this fundamental issue. 
 \end{itemize}

\section{Numerical results}\label{sec:examples}
We test our learning method on three types of operators: linear integral operators, nonlocal operators and nonlinear operators. For each type, we systematically examine the method in the regimes of noiseless and noisy data, with kernels in and out of the SIDA-RKHSs. 
Since the ground-truth kernel is known, we study the convergence of estimators to the true kernel as the data mesh refines. Thus, the regularization has to overcome both numerical error and noise in the imperfect data. 
\ifjournal 
 All codes used will be publicly released on GitHub.
 \fi
\ifarXiv
 The codes implementing the proposed algorithm are shared on GitHub\footnote{URL: \url{https://github.com/LearnDynamics/DARTR_kernel}}. 
\fi

\ifjournal \vspace{-2mm} \fi
\subsection{Settings and main results}
The settings of the numerical tests for all three types of operators are as below. 

\vspace{1mm}
\noindent\textbf{Comparison with baseline methods.} On each dataset, we compare our SIDA-RKHS regularizer with two baseline regularizers using the projected $l^2$ and $L^2$ norm (denoted by l2 and L2 in the figures below, respectively) defined in Section \ref{sec:projectedl2L2}. All three regularizers use the same L-curve method to select the hyper-parameter $\lambda$ as described in Appendix \ref{appendix:algorithm}. 
They differ only at the regularization norm. 

\vspace{1mm}
\noindent\textbf{Settings of synthetic data.} 
We test two kernels for each type of operators:\\ 
\noindent$\bullet$  \emph{Truncated sine kernel.} The truncated sine kernel is $\phi_{true}= \sin(2x)\mathbf{1}_{[0,3]}(x)$. It represents a kernel with discontinuity. Due to the nonlocal dependency of the operator on the kernel, this discontinuity can cause a global bias to the estimator. 
  \\ 
\noindent$\bullet$ \emph{Gaussian kernel.} The kernel $\phi_{true}$ is the Gaussian density centered at 3 with standard deviation 0.75. It represents a smooth kernel whose interaction concentrated in the middle of its support. 

The kernels act on the same set of function $\{u_k\}_{k=1,2}$ with $u_1= \sin(x)\mathbf{1}_{[-\pi,\pi]}(x)$ and $u_2(x) =\sin(2x)\mathbf{1}_{[-\pi,\pi]}(x)$. When generating the data for learning, the integral $R_\phi[u_k]=f_k$ is computed by the adaptive Gauss-Kronrod quadrature method. 
This integrator is much more accurate than the Riemann sum integrator that we will use in the learning stage.
To create discrete datasets with different resolutions, for each $\Delta x\in0.0125\times\{1,2,4,8,16\}$, we take values of $\{u_k,f_k\}_{k=1}^N = \{u_k(x_j),f_k(x_j):x_j \in {[-40,40]}, j=1,\ldots,J\}_{k=1}^N$, where $x_j$ is a point on the uniform grid with mesh size $\Delta x$. For the nonlinear operator, to avoid the inverse problem being ill-defined, we introduce add an additional pair of data $(u_3,f_3)$ with $u_3(x) =x \mathbf{1}_{[-\pi,\pi]}(x)$ (see Section \ref{sec:mfOpt} for more details). In short, the discrete data $\{u_k\}_{k=1,2}$ are continuous functions and the discrete data $u_3$ is a piece-wise smooth function. 

For each kernel, we consider both noiseless and noisy data with different noise levels by taking values of noise-to-signal-ratio ($\mathrm{nsr}$) in $\{0,0.5,1,1.5,2\}$. Here the noise is added to each spatial mesh point, independent and identically distributed centered Gaussian with standard deviation $\sigma$, and the noise-to-signal-ratio is the ratio between $\sigma$ and the average $L^2$ norm of $f_k$.

\vspace{0.5mm}
\noindent\textbf{Settings for the learning algorithm.} When estimating the kernels from the discrete data, we estimate the values of the kernel on the points $\mathcal{S}=  \{r_j\}_{j=1}^J$ with $r_j= j\Delta x$, the support of the empirical exploration measure $\rho$. When the data mesh refines, the size of this set increases. In terms of the algorithm in Section \ref{sec:alg}, such a discrete estimation uses a hypothesis space with B-spline basis functions consisting of piece-wise constants with knots being the points in $\mathcal{S}$. Thus, this hypothesis space has the largest dimension for the basis matrix $B_n$ in \eqref{eq:AbB1} being non-singular, and there is no need to select an optimal dimension. In this setting, the regularizer is the only source of regularization and there is no regularization from basis functions. Hence, this setting highlights the role of the regularizers.           

\vspace{0.5mm}
\noindent\textbf{Performance assessment.} We assess the performance of the regularizers by their ability to consistently identify the true kernels in the presence of numerical error (in the Riemann sum approximation of the integrals due to discrete data) and noise (due to noisy data). We present typical estimators, the $L^2(\rho)$ errors of the estimators as data mesh refines, as well as the statistics (mean and standard deviation) of the rates of convergence that are computed from 20 independent simulations. 

\vspace{0.5mm}
\paragraph{Summary of main results} Our main finding are as follows. \vspace{-2mm}
\begin{itemize}[leftmargin=*]\setlength\itemsep{-1mm}
\item The SIDA-RKHS regularizer's estimators are the most accurate as long as the optimal regularization parameter $\lambda$ is properly selected. However, the selection of $\lambda$ depends on multiple factors, ranging from the form of the operator, the numerical approximation, the noise and the treatment of the singular or ill-conditioned normal matrix. As a result, it occasionally happens that the $l^2$ or $L^2$ regularizer performs slightly better.  Thus, in addition to accuracy of the estimator, it is important to also compare the consistency of convergence rates for different levels of noise. 
\item The SIDA-RKHS regularizer robustly leads to estimators converging at a consistent rate for all levels of noises for each operator, while the other two regularizers cannot. 
\item The rate of convergence of the SIDA-RKHS regularizer's estimator from noisy data depends on both the continuity of the kernel and the continuity of the discrete data: when the two matches, the rate is higher and closer to 1, as shown in Table \ref{tab:rates}. 
\end{itemize}
\begin{table}
{\small
\centering
\caption{Rate of convergence of the SIDA-RKHS regularizer's estimators from noisy data. }\label{tab:rates}
     \begin{tabular}{c | c |  c |  c} \hline
 \hline
              &  \multicolumn{1}{c|}{Linear Integral Operator} & \multicolumn{1}{c|}{Nonlinear Operator} & \multicolumn{1}{c}{Nonlocal Operator} \\
   Kernel &  Data continuity(C)  & Data continuity (D) & Data continuity (C)\\ 
 \hline
Truncated Sine (D)  &   0.29   &  {\bf 0.94} &  0.29\\
Gaussian (C)  & {\bf  0.62 }  & 0.66 &   {\bf 1.01}\\
 \hline
\end{tabular}\\
* Here ``C'' stands for continuous, and ``D'' stands for discontinuous. When the continuity of the kernel and data matches, the rates are closer to $1$ than when the two dis-matches. The rates are the average of the mean rates for $\mathrm{nsr} \in \{0.1,0.5,1,2\}$ in the right columns of Figure \ref{fig:linear}-\ref{fig:nonlocal}. We do not report the rate for the $l^2$ and $L^2$ regularizers because they do not have a consistent rate.    \vspace{-3mm}
}\end{table}

\ifjournal \vspace{-2mm} \fi
\subsection{Linear integral operators}
We consider first the integral operator with kernel $\phi$: \vspace{-2mm}
\begin{equation*}
R_\phi[u](x) = \int_\Omega \phi(|y-x|) u(y)dy= f(x). 
\end{equation*} 
After a change of variables in the integral, it is the operator $R_\phi$ in \eqref{eq:operator} with $g[u](x,y)= u(x+y)$. Such kernels in operators arise in a wide range of applications, such as the Green's function in PDEs (see e.g., \cite{evans2010partial,gin2021deepgreen}) and convolution kernels in image processing in \cite{owhadi2019kernel}, to name just a few.    

For this operator, the exploration measure $\rho$ (defined in \eqref{eq:rho_conti}) is a uniform measure, since each data $g[u_k]$ interacts with the kernel uniformly. Furthermore, since each $g[u_k]$ is continuous, the reproducing kernel $\Gbar$ in \eqref{eq:G} is continuous on the support of $\rho$, thus the SIDA-RKHS consists of continuous functions. As a result, we expect the algorithm to learn the smooth Gaussian kernel better than the discontinuous truncated sine kernel.

\begin{figure}[h!]
    \centering 	\vspace{-2mm} \hspace{-4.5mm}
 \includegraphics[width =1\textwidth]{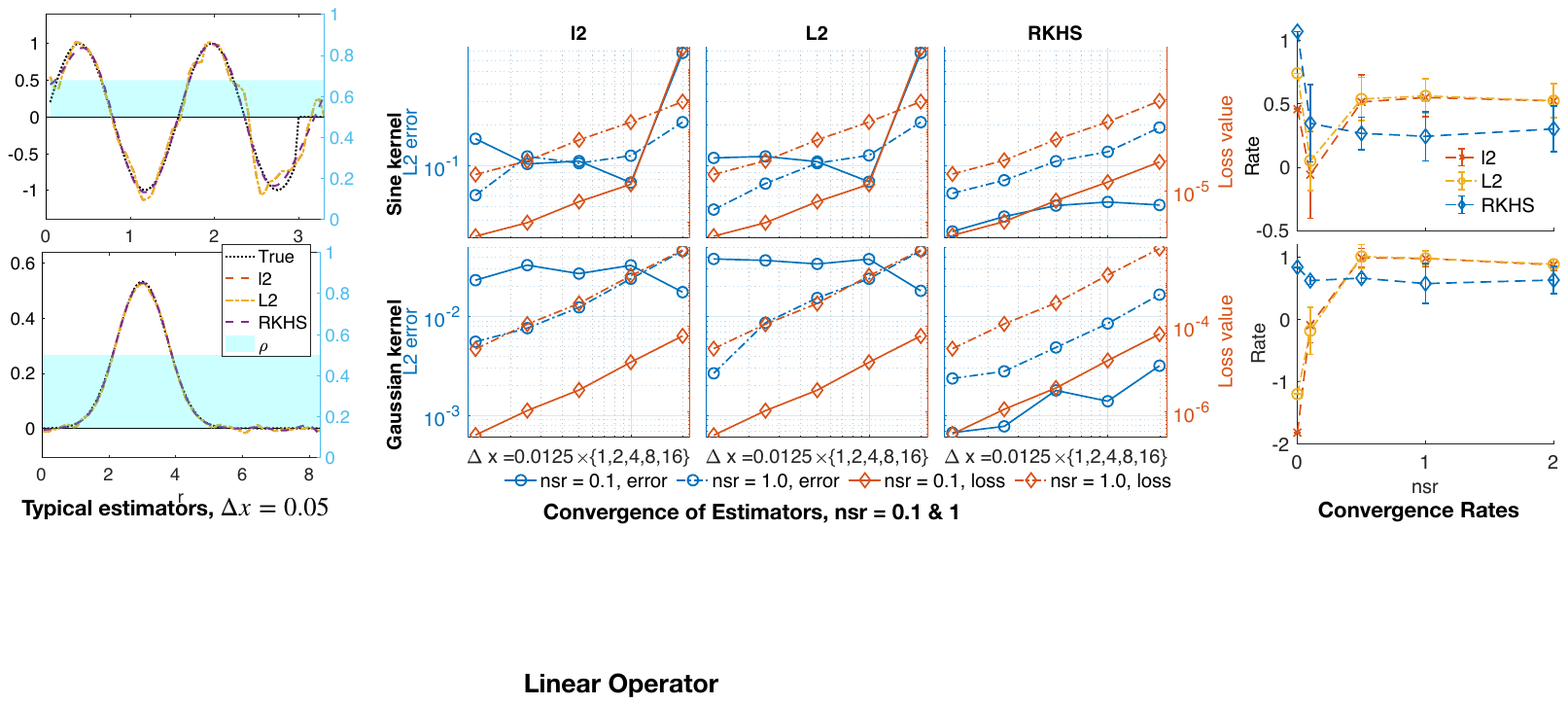}  \vspace{-3mm}
\caption{{\small Linear integral operators with the sine kernel (top row) and Gaussian kernel (bottom row). Left Column: typical estimators by the three regularizers, in comparison of the true kernel, superimposed with the exploration measure $\rho$ (in cyan color), when $\Delta x=0.05$ and noise-to-signal-ratio $\mathrm{nsr} = 1$. 
Middle 3-Columns: convergence of estimators as the data mesh-size $\Delta x$ refines, along with values of the loss function. 
Right Column: the mean and standard deviation of the convergence rates in 20 independent simulations, with five levels of noise (with $\mathrm{nsr} \in \{0,0.1,0.5,1,2\}$).  
Only the SIDA-RKHS regularizer's estimator consistently converges for all levels of noise, 
 and its estimators are mostly more accurate than those of the other two regularizers.  
}} \label{fig:linear}		\vspace{-3mm}
\end{figure}

Figure \ref{fig:linear} shows the results. The left column shows the typical estimators by the three regularizers, in comparison of the true kernel, when $\Delta x=0.05$ and noise-to-signal-ratio $\mathrm{nsr} = 1$. The exploration measure $\rho$ (in light cyan color) is uniform for each kernel, and its support, estimated from the difference between the supports of $g[u_k]$ and $f_k$, is slightly larger than the support of the true kernel. All three regularizers lead to accurate estimators. The RKHS regularizer's estimators are the closest to the true kernel and this is further verified in the middle 3-column panel with $\Delta x=0.05$ add $\mathrm{nsr} = 1$: for the sine kernel, all three estimators' $L^2(\rho)$ errors are about $10^{-1}$; but for the Gaussian kernel, the RKHS's estimator has an error close to $10^{-2.5}$ while the other two regularizers' error are about $10^{-2}$. 

The middle 3-column panel shows the convergence of the estimator's $L^2(\rho)$ error as the data mesh refines when $\mathrm{nsr} = 0.1 $ and  $\mathrm{nsr} = 1$, superimposed with the corresponding values of the loss function. When $\mathrm{nsr} = 1$, all three regularizers' estimators converge for both kernels, at rates that are close to the rates of the loss function, and their errors are comparable. However, when $\mathrm{nsr} = 0.1$, the RKHS regularizer continues to yield converging estimators, whereas the other two regularizers have flat error lines even though the corresponding loss values keep decaying. In particular, those flat error lines are above those errors for $\mathrm{nsr} = 1$ with $\Delta x\leq 0.025$, i.e., when the numerical error is small. Thus, these results demonstrates the importance to take into account the function space of learning via SIDA-RKHS, particularly when the noise level is relatively low.   

The right column shows the mean and standard deviations of the rates of convergence in 20 independent simulations. The RKHS regularizer has consistent rates of convergence for all levels of noises. The rates are closer to 1 for the smooth Gaussian kernel (which matches the continuity of data) than the rates for the discontinuous truncated sine kernel when the data are noisy. The rates are close to 1 when the data are noiseless. On the other hand, the $l^2$ and $L^2$ regularizers fails to have consistent rates when the noise level reduces. In particular, for the sine kernel, they present deceivingly higher rates than the RKHS regularizer when $\mathrm{nsr}\in\{0.5,1,2\}$, and the middle 3-column panel reveals the facts: they often have much larger errors than the RKHS when $\Delta x =0.2$, thus leading to deceiving better rates even when their errors remains large as $\Delta x$ decreases. 

In short, 
the RKHS regularizer leads to estimators that converge consistently, at lower rates for the discontinuous sine kernel (which is discontinuous, different from the dta) and at higher rates for the smooth Gaussian kernel (which match the continuity of the data), while the $l^2$ and $L^2$ regularizers cannot. Furthermore, RKHS regularizer's estimators are often more accurate than those of the other two regularizers.

\ifjournal \vspace{-2mm} \fi
\subsection{Nonlinear operators}\label{sec:mfOpt}
Next we consider the nonlinear operator $R_\phi$ with $g[u](x,y)= \partial_x[u(x+y)u(x)]$:
\begin{equation*}
R_\phi[u](x) = \int_{\Omega} \phi(|y|) \partial_x[u(x+y)u(x)]dy =  [u*\phi(|\cdot|) u]'(x). 
\end{equation*}
Such nonlinear operators arise in the mean-field equations of interaction particles (see e.g.,\cite{JW17,MT14,LMT21,LangLu22}), and the function $\phi$ is called an interaction kernel. More precisely, the mean-field equations are of the form $\partial_t u=\nu \Delta u + \mathrm{div} (u*K_\phi u) $ on $\R^d$, where $K_\phi(y) = \phi(|y|)\frac{y}{|y|}$. Here we consider only $d=1$ and neglect the ratio $\frac{y}{|y|}$ to obtain the above operator. 

We add an additional pair of data $(u_3,f_3)$ with $u_3(x) =x \mathbf{1}_{[-\pi,\pi]}(x)$, so as to avoid the issue that the value of $[u*\phi(|\cdot|) u](x)$ is under-determined from the data $f(x)=[u*\phi(|\cdot|) u]'(x)$ due to the differential. Here we set the derivative of $u_3$ to be $u_3'(x) = \mathbf{1}_{[-\pi,\pi]}(x)$. These derivatives are approximated by finite difference when learning the kernel from discrete data. Note that the $u_3$ and its derivative have jump discontinuities. As a result, the reproducing kernel $\Gbar$ in \eqref{eq:G} also has discontinuity, and the SIDA-RKHS contains discontinuous functions. 
\begin{figure}[h!]
    \centering 	\vspace{-2mm} \hspace{-4.5mm}
 \includegraphics[width =1\textwidth]{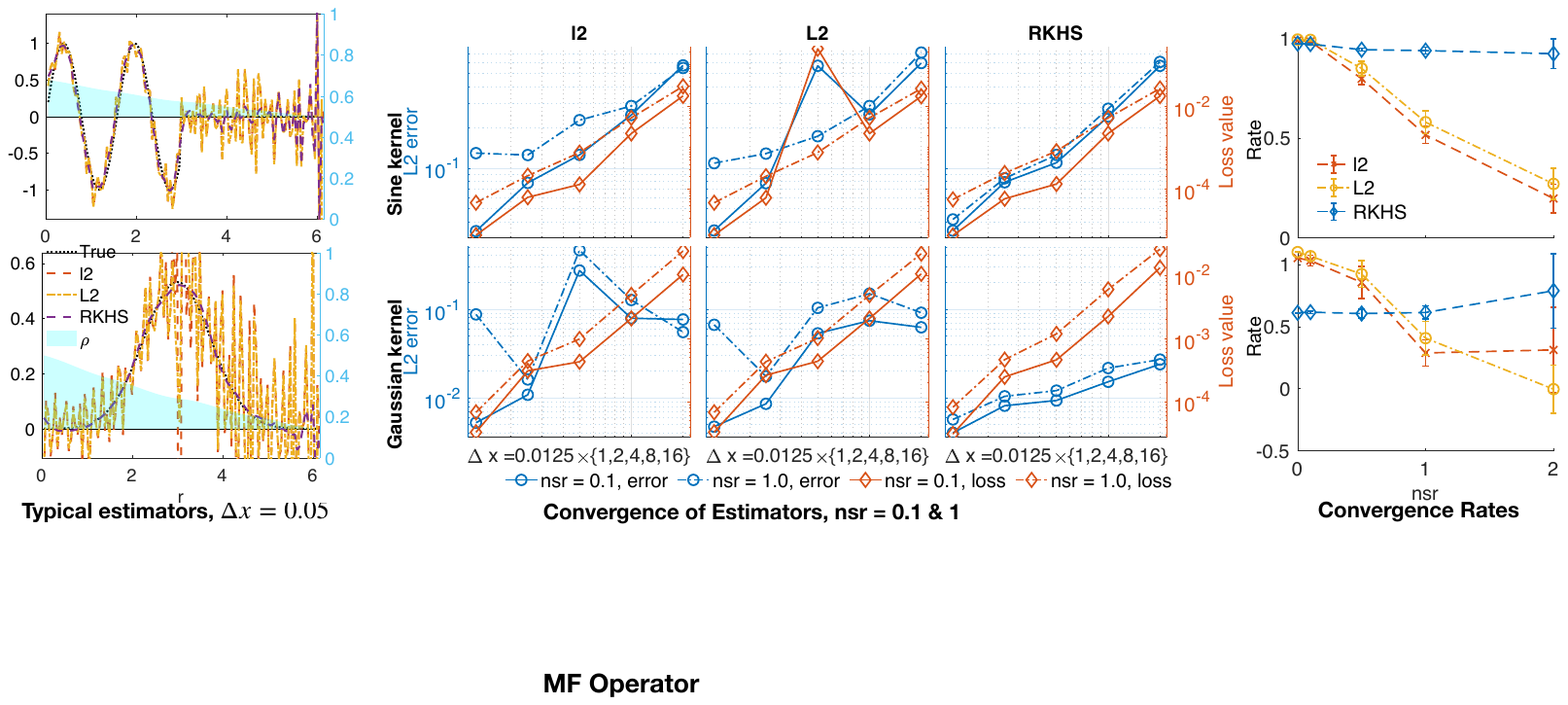}  \vspace{-3mm}
\caption{{\small Nonlinear operators, in the same setting as in Figure \ref{fig:linear}.
The SIDA-RKHS regularizer's estimators are significantly more accurate than those of the $l^2$ and $L^2$ regularizers in the left column. The middle 3-column panel shows that the SIDA-RKHS regularizer leads to consistently converging estimators as the data mesh refines, for both levels of noise, while the other two regularizers have slower and less consistent error decay and their error lines flatten when the noise level is $\mathrm{nsr}=1$. The right column shows that only the SIDA-RKHS regularizer has consistent rates for all levels of noise, and the other two regularizers' rates drops significantly when the noise level increases. 
}} \label{fig:nonlinear}		\vspace{-1mm}
\end{figure}


Figure \ref{fig:nonlinear} shows the results. The left column shows that the exploration measure $\rho$ is non-uniform due to the nonlinear function $g[u_k]$, and its density is a decreasing function, suggesting that the data explores the short range interactions more than the long range interaction. The RKHS regularizer's estimators significantly outperforms the other two regularizers, and they are near smooth and are close to the true kernels. The $l^2$ and $L^2$ regularizers have largely oscillating estimators, suggesting an overfitting. Note that the RKHS estimators also have oscillating parts, but they are only in the region where the exploration measure has little weight, due to limited data exploration. The superior performance of RKHS regularizer is further verified in the middle 3-column panel with $\Delta x=0.05$ add $\mathrm{nsr} = 1$: its errors are much smaller than those of the other two regularizers. 

The middle 3-column panel shows that the RKHS regularizer's error consistently decreases as the data mesh refines. 
 In contrast, the other two regularizers have slower and less consistent error decay, in particular, their error lines flatten as the noise level increases. 

The right column shows that the RKHS regularizer has consistent rates of convergence for all levels of noises, with all rates close to 1 for the sine kernel, and slightly above 0.5 for the Gaussian kernel. In comparison, the other two regularizers' rates decreases as the noise level increases, dropping to close zero when the noise level is $\mathrm{nsr}=2$. 

In short, the RKHS regularizer's estimators are more accurate than those of the $l^2$ and $L^2$ regularizers. More importantly, the RKHS regularizer consistently leads to convergent estimators, maintaining similar rates for all levels of noises, at rates close to $1$ for the truncated sine kernel (which is discontinuous, matching the discontinuity of data) and at rates slightly above $0.5$ for the Gaussian kernel (which is smooth, different from the data). The $l^2$ and $L^2$ regularizers have convergent estimators, but the rate of convergence drops when the noise level increases.

\ifjournal \vspace{-2mm} \fi
\subsection{Nonlocal operators}
At last, we consider nonlocal operators $R_\phi$ with $g[u](x,y)= u(x+y)-u(x)$: \vspace{-2mm}
\begin{equation*}
R_\phi[u](x) = \int_{\Omega} \phi(|y|) [u(x+y)-u(x)]dy. 
\end{equation*}
Such nonlocal operators arise in various areas such as nonlocal and fractional diffusions (see e.g.,\cite{du2012_AnalysisApproximation,applebaum09,bucur2016_NonlocalDiffusion}) and such nonlocal operators have been used to construct homogenized models for peridynamic  in \cite{you2022_DatadrivenPeridynamic,you2020_DatadrivenLearning}.  

\begin{figure}[h!]
    \centering 	\vspace{-2mm} \hspace{-4.5mm}
 \includegraphics[width =1\textwidth]{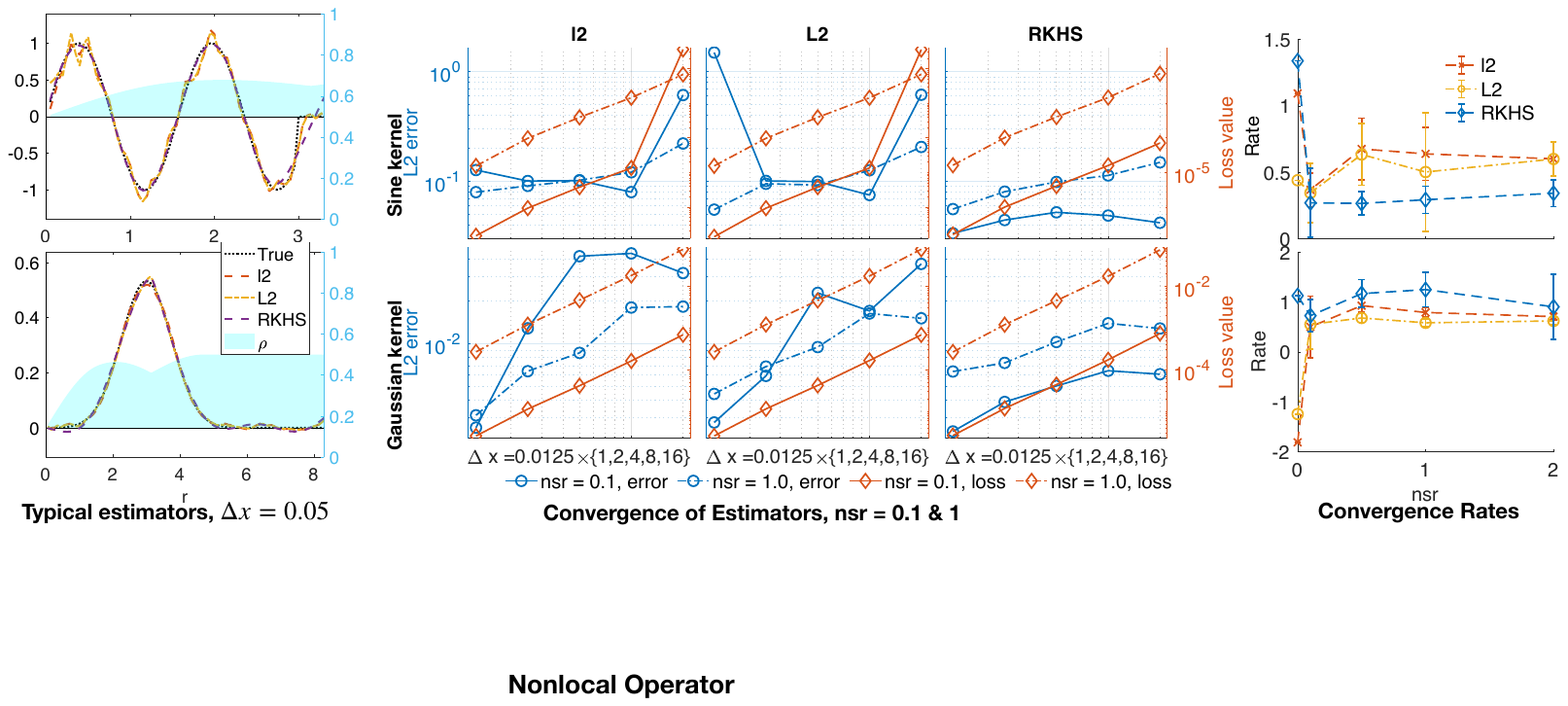}  \vspace{-3mm}
\caption{{\small Nonlocal operators, in the same setting as in Figure \ref{fig:linear}.
\ifjournal
 Overall, the SIDA-RKHS estimators have the smallest error mostly, and it is the only one with consistent rates for all levels of noise  
\fi
\ifarXiv
The left column show that all regularizers lead to accurate estimators. The middle 3-column panel shows that the SIDA-RKHS regularizer leads to converging estimators as the data mesh refines for two levels of noise, though at a slow rate for the sine kernel. The $l^2$ and $L^2$ regularizers have less consistent error decay for different noise levels and different kernels. Overall, the SIDA-RKHS estimators have the smallest error mostly.  The right column shows that only the SIDA-RKHS regularizer has consistent rates for all levels of noise.  
\fi
}} \label{fig:nonlocal}		\vspace{-4mm}
\end{figure}


Figure \ref{fig:nonlocal} shows the results. The left column shows typical estimators. The exploration measure $\rho$ shrinks to zero near the origin due to the difference $g[u]= u(y)-u(x)$ and the continuity of $u$. All three regularizers lead to accurate estimators, and the RKHS estimator is the most accurate.  

In the middle 3-column panel, we observe again that the RKHS regularizer leads to estimators remain converging as data mesh refines for both noise levels, even though the errors decay slower than the loss function.   
On the other hand, the $l^2$ and $L^2$ regularizers have inconsistent error decay: the errors decreasing monotonically when $\mathrm{nsr}=1$, but the error lines oscillate when $\mathrm{nsr}=0.1$ for the sine kernel, and for the Gaussian kernel, they present deceiving rates larger than the decay of the loss function due their large errors when $\Delta x$ is large.   

The right column further confirms the consistency of the RKHS regularizer's rates and the inconsistency of the $l^2$ and $L^2$-regularizers' rates. When the data is noisy, the rates of the RKHS regularizer are about 0.29 for the truncated sine kernel (which has a jump discontinuity) and about 1 for the Gaussian kernel (which is continuous). Meanwhile, the rates for the $l^2$ and $L^2$-regularizers are about 0.65 for the sine kernel, and about 0.8 for the Gaussian kernel.  We note again that they can have deceivingly better rates than the RKHS regularizer's while their errors are larger. Moreover, when the data is noiseless, RKHS regularizer has rates close to 1 for both kernels, while the other two regularizers rates are not consistent. 

\section{Discussion and future work}
We have proposed a data adaptive RKHS Tikhonov regularization (DARTR) method for the nonparametric learning of kernel functions in operators. The DARTR method regularizes the least squares regression by the norm of a system intrinsic and data adaptive (SIDA) RKHS,  which constraints the learning to the function space of identifiability. 

Our numerical tests on synthetic data suggests that DARTR has the following advantages: (1) it is naturally adaptive to both data and the operator; (2) it is robust to numerical error due to discrete data and white noise in data, leading to estimators converging at a consistent rate for different levels of noises; (3) it is computationally as efficient as classical nonparametric least squares regression methods, requiring in addition only an exploration measure and a basis matrix that come with negligible computational cost.    

This study presents a preliminary introduction of the DARTR method.  There are several directions for further development and analysis of DARTR in general settings and applications:\vspace{-2mm}
\begin{enumerate}[leftmargin=*]\setlength\itemsep{-1mm}
\item Convergence analysis. We are in short of a convergence analysis of the regularized estimators due to the numerical errors in the normal matrix. 
\item Multivariate kernel functions. When the kernel is a multivariate function, sparse-grid representation or sparse basis functions (sparse polynomials) become necessary. A related issue is to select the optimal dimension of the hypothesis space. 
\item Applications to Bayesian inverse problems. In a Bayesian perspective, the Tikhonov regularization can be interpreted as a Gaussian prior with a covariance matrix corresponding to the penalty term. In this perspective, our SIDA-RKHS norm coincides with the Zellner's g-prior (\cite{Zellner1980gprior,Bayarri2012Criteria}) that uses $\Abar_n^{-1}$ as prior covariance, because we have $B_{rkhs} =\Abar_n^{-1}$ when  the basis functions are orthonormal in $L^2(\rho)$. 
\item  The DARTR method is applicable to general linear inverse problems with a quadratic loss functional. 
It is particularly useful when the data depends on the unknown function non-locally.  
\end{enumerate}

\ifjournal
\newpage
\acks{We thank a bunch of people.}
\bibliography{ref_FeiLU2022_1,ref_prop,ref_regularization,ref_nonlocal_kernel,ref_sparseRegression,ref_operator,ref_inverseP,ref_kernel_learning}
\fi

\appendix
\ifjournal
\section{Proofs}\label{sec:proofs}
\begin{proof}[Proof of Lemma {\rm \ref{lemma:Gbar}}] 
Recall that a bi-variate function $\Gbar$ is positive semi-definite if for any $(c_1,\ldots,c_m)\in \R^m$ and any $\{r_j\}_{j=1}^m\subset \R^{d}$, the sum $\sum_{i=1}^m\sum_{j=1}^m c_ic_j\Gbar(r_i,r_j)\geq 0$. (see e.g. \cite{BCR84,Cucker2007,LLMTZ21}). Using \eqref{eq:G} and \eqref{eq:Gbar}, we have 
\begin{align*}
 \sum_{i=1}^m\sum_{j=1}^m c_ic_j\Gbar(r_i,r_j) 
= &  \frac{1}{N}\sum_{k=1}^N\int_{|\eta | =1} \int_{|\xi |=1}  \left[ \int \sum_{i=1}^m\sum_{j=1}^m c_ic_j\frac{g[u_k](x,r_i\xi)  g[u_k](x,r_j\eta) }{\rho(r_i)\rho(r_j)}  dx \right]d\xi d\eta \\
= & \frac{1}{N}\sum_{k=1}^N\int_{|\eta | =1} \int_{|\xi |=1}  \left[ \int \left|\sum_{i=1}^m c_i\frac{g[u_k](x,r_i\xi) }{\rho(r_i)}  \right|^2 dx\right]d\xi d\eta \geq 0.
\end{align*}
Thus $\Gbar$ is positive semi-definite. 
The operator $\LGbar$ is compact because  $\Gbar\in L^2(\rho\times \rho)$, which follows from the fact that each $u_k$ is bounded and the definition of $\rho$ (see also in \cite{LangLu21}). Also, since $\Gbar$ is positive semi-definite, so is $\LGbar$. The equation \eqref{eq:dbinnerp_LG} follows from \eqref{eq:binlinearForm_int}. 
\end{proof}

\vspace{2mm}
\begin{proof}[Proof of Lemma {\rm \ref{lemma:sidaRKHS}}] 
Part (a) is a standard operator characterization of the RKHS $H_G$ (see e.g., \cite[Section 4.4]{Cucker2007}). 

For Part (b), since the operator $\LGbar$ is symmetric positive semi-definite and compact as shown in Lemma \ref{lemma:Gbar}, the eigenfunctions are orthonormal and the eigenvalues decay to zero. The first equation in \eqref{eq:norms} follows from \eqref{eq:dbinnerp_LG} and the second equation follows from the orthonormality of the eigenfunctions. At last, if $\phi\in H_G$, by the characterization of the inner product of $H_G$ in Part (a), we have the third equation in \eqref{eq:norms}. 

The first equality in Part (c) follows from Part (a) and that $\LGbar^{-1/2}$ is self-adjoint, which implies that $ \innerp{\LGbar \phi,\psi}_{H_G} =\innerp{\LGbar^{1/2}\phi, \LGbar^{-1/2}\psi}_{L^2(\rho)} =  \innerp{\phi,\psi}_{L^2(\rho)} $.  The second equality in \eqref{eq:innerp_HG} follows from the first equality and \eqref{eq:dbinnerp_LG}. 
\end{proof}

\vspace{2mm}
\begin{proof}[Proof of Theorem {\rm \ref{thm:D_lossFnL2}}] 
From \eqref{eq:dbinnerp_LG}, we can write the loss functional in \eqref{eq:lossFn2} as 
\[ \calE(\phi) = \innerp{\LGbar \phi,\phi}_{L^2(\rho)} -2 \innerp{\phi_N^f,\phi}_{L^2(\rho)}+C_N^f.  \]
Then we can compute the Fr\'echet derivative directly from definition, and Part (a) follows. 

For Part (b), first note that for any $\phi_N^f\in \LGbar(L^2(\rho))$, the estimator $\widehat{\phi} = \LGbar^{-1}\phi_N^f$ is the unique zero of the loss functional's Fr\'echet derivative in $H$, hence it is the unique minimizer of $\calE(\phi)$ in $H$. In particular, when the perfect data is generated from $\phi_{true}$, i.e. $R_{\phi_{true}}[u_k]=f_k$, by \eqref{eq:dbinnerp_LG} and the definition of the bilinear form  $\dbinnerp{\cdot,\cdot}$ in \eqref{eq:binlinearForm}, we have 
\[
\innerp{\phi_N^f,\psi}_{L^2(\rho)}= \innerp{\LGbar\phi_{true},\psi}_{L^2(\rho)}
\]
for any $\psi\in L^2(\rho)$. Thus, $\phi_N^f= \LGbar\phi_{true}$ and $\widehat{\phi} = \LGbar^{-1}\phi_N^f = \phi_{true}$. That is, $\phi_{true}\in H$ is the unique minimizer of the loss functional $\calE$ for perfect data. Meanwhile, note that $H$ is the orthogonal complement of the null space of $\LGbar$, and $\calE(\phi_{true}+\phi^0) = \calE(\phi_{true})$ for any $\phi^0$ such that $\LGbar \phi^0=0$.  Thus, $H$ is the largest such function space, and we conclude that  $H$ is the function space of identifiability. 

To prove Part (c), we further re-write the loss functional as 
\begin{align*}
\calE(\phi) 
= &   \innerp{\LGbar \phi,\LGbar\phi}_{H_G} -2 \innerp{\LGbar^{1/2}\phi_N^f,\LGbar^{1/2}\phi}_{H_G}+C_N^f,  
\end{align*}
which follows from \eqref{eq:innerp_HG} and the definition of $\innerp{\cdot,\cdot}_{H_G}$. Thus, by definition, the Fr\'echet derivative of $\calE(\phi)$ in the direction of $\psi\in H_G$ is 
\begin{align*}
\innerp{\nabla^{H_G} \calE(\phi), \psi}_{H_G} & = \lim_{\epsilon\to 0}\frac{1}{\epsilon}[ \calE(\phi+\epsilon \psi ) -  \calE(\phi)]  \\
& =  2\innerp{\LGbar \phi,\LGbar \psi }_{H_G} - 2\innerp{\LGbar^{1/2} \phi_N^f, \LGbar^{1/2}\psi}_{H_G}\\ 
& =  2\innerp{\LGbar^2 \phi - \LGbar\phi_N^f,\psi}_{H_G},
\end{align*}
which gives the  Fr\'echet derivative $\nabla^{H_G} \calE(\phi)$. 
\end{proof}

\begin{proof}[Proof of Theorem \ref{thm:LG_A}] Let $\psi_k = \sum_{j=1}^n  V_{jk}\phi_j$ with $V^\top B_n V = I_n$.  Then, $\psi_k$ is an eigenfunction of $\LGbar$ with eigenvalue $\lambda_k$ if and only if for each $i$, \vspace{-3mm} 
\begin{align*}
 \innerp{\phi_i, \lambda_k \psi_k}_{L^2(\rho)} = \innerp{\phi_i, \LGbar \psi_k}_{L^2(\rho)} =\sum_{j=1}^n \innerp{\phi_i,\LGbar \phi_j}_{L^2(\rho)}  V_{jk}= \sum_{j=1}^n\Abar_n(i,j) V_{jk},\vspace{-2mm} 
\end{align*}
where the last equality follows from the definition of $\Abar_n$ in \eqref{eq:AbB1}. 
Meanwhile, by the definition of $B_n$ we have $ \innerp{\phi_i, \lambda_k \psi_k}_{L^2(\rho)} = \sum_{j=1}^n B_n(i,j) \lambda_k V_{jk}$ for each $i$. Then, Equation \eqref{eq:gEigenP} follows. 
\end{proof}
\fi

\section{Algorithm details}
\vspace{-1mm}
\subsection{Detailed nonparametric learning algorithm}\label{appendix:algorithm}
We consider only discrete data $\{u_k(x_j), f_k(x_j)\}_{k = 1}^{N}$ in 1-dimensional and at equidistant mesh points $\{x_j = j\Delta x\}_{j=0}^J$. The extension to multi-dimensional cases is straightforward. 

\paragraph{Step 1: Estimate the exploration measure and assemble regression data.} 
We first estimate the exploration measure and extract the regression data that can be used for all hypothesis spaces by utilizing the regression structure and reading the data only once. This step can reduce the computational cost in orders of magnitude when the data is large with thousands of pairs $(u_k,f_k)$ with fine mesh.

Let $R_0$ be the diameter of the set $\Omega$.  The discrete data set $\{u_k(x_j), f_k(x_j)\}_{k = 1}^{N}$ explores only the variable $r$ of $\phi$ in the set $\mathcal{R}_N^J  =\{r_{ijk}= |y_i|  \leq R_0:  g[u_k](x_i,y_j)\neq 0 \text{ for some } i,j,k\}$, the set of all values  explored by data with repetition. A discrete approximation of the exploration measure $\rho$ in \eqref{eq:rho_conti} is
\begin{align}\label{eq:rho_disc}
\rho_N^J(dr) & =  \frac{1}{|\mathcal{R}_N^J|}\sum_{k=1}^N\sum_{i,j=1}^J \delta_{|y_i|}( r) | g[u_k](x_j,y_i)|. 
\end{align}
This measure $\rho_N^J$ uses only the information from $u_k$ and it does not reflect the information about the kernel in $f_k$. To estimate the support of the kernel, we extract the additional information from $\{f_k\}$ as follows. We set the data-adaptive support of the kernel to be $[0,R]$ with $R$ defined by 
\begin{equation}\label{eq:suppKernel}
R= 1.1 \min\{R_\rho, \max \{  |L^f_i - L^u_i|,|R^f_i - R^u_i|\}_{i=1}^N \}, 
 \end{equation}
where $(L^u_i, R^u_i )$ and $(L^f_i, R^f_i )$ are the lower and upper bounds of the supports $g[u_k](x,y)$ and $\supp(f_k)$ respectively, and $R_\rho$ is the maximum of the support of $\rho_N^J$. 
That is, the support of the kernel lies inside the support of the exploration measure, and it is the maximal interaction range indicated by the difference between supports of $u_k$ and $f_k$, which extracts the additional information in the data $\{f_k\}$. Here the multiplicative factor 1.1 is an artificial factor to enlarge the range, so that the supports of the basis functions will fully cover the explored region. 

The estimated support of the kernel is the region explored by data. Outside of the region, the data provides little information about the kernel. Thus, we focus on learning the kernel in this region and set the local basis functions to be supported in it. Accordingly, we constrain the exploration measure to be supported in $[0,R]$, and for simplicity of notation, we still denote it by $\rho_N^J$. 

\vspace{1mm}
\noindent\textbf{Assemble regression data. } Next, we assemble the regression data that will be used repeatedly, thus saving the computational cost by orders of magnitude, particularly when the data size is large with thousands of pairs $(u_k,f_k)$. In order to compute the normal matrix $\Abar(i,j) = \dbinnerp{\phi_i,\phi_j}$ for any pair of basis functions, with the bilinear form defined in \eqref{eq:binlinearForm_int}, we only need the integral kernel $G$. In particular, when $d=1$, the integral $\int_{|\eta | =1} h(\eta)d\eta = h(\eta) + h(-\eta)$, therefore, we have 
\begin{align}\label{eq:G1D}
G(r,s) = \frac{1}{N}\sum_{k=1}^N \int_\Omega  \left(g[u_k](x,r) + g[u_k](x,-r) \right)  \left(g[u_k](x,s) + g[u_k](x,-s) \right)  dx
\end{align}
for $ r,s\in \mathrm{supp}(\rho)$. Similarly, for a basis function $\phi_i$, to compute $\bbar(i)$ in \eqref{eq:Ab}, which can be re-written as  
\begin{align}\label{eq:b_numerical}
\bbar_n(i)=  \frac{1}{N}\sum_{k=1}^N \int  R_{\phi_i}[u_k](x) f_k(x)dx  = \int_0^R \phi_i(r) g_N^f(r) dr,
\end{align}
we only need the function $g_N^f$ defined by 
\begin{equation}\label{eq:g_N^f}
g_N^f(r) =   \frac{1}{N}\sum_{k=1}^N \int_\Omega \left(g[u_k](x,r) + g[u_k](x,-r) \right)  f_k(x)dx . 
\end{equation}
Let $r_k= k\Delta x$ for $k = 1, \ldots,  \floor{\frac{R}{\Delta x}}$, which are the mesh points of $\phi$ explored by the data. Then, all the regression data  we need  in the original data \eqref{eq:data} are 
\begin{equation}\label{eq:regressionData}
 \left\{G(r_k,r_l), g_N^f(r_k), \rho_N^J(r_k), \text{ with } k,l = 1, \ldots,  \floor{\frac{R}{\Delta x}}\right\},
\end{equation}
where $G$, $g_N^f$ and $\rho_N^J$ are defined respectively in \eqref{eq:G1D}, \eqref{eq:g_N^f} and \eqref{eq:rho_disc}.

\paragraph{Step 2: Select a class of hypothesis spaces and assemble regression matrices and vectors.} 

We set a class of data-adaptive hypothesis spaces  $\mH_n =\mathrm{span}\{\phi_i\}_{i=1}^n$ with their dimensions set to range from under-fitting to over-fitting. 
The basis functions can be either global basis functions such as polynomials and trigonometric functions, or local basis functions such B-spline polynomials (see e.g., Chapter 2 of \cite{piegl1997_NURBSBook} and \cite{lyche2018_FoundationsSpline}). 
To set the range for $n$, we note that the mesh points of the kernel's independent variable explored by data are $\{k\Delta x:  k=1,\ldots, \floor{\frac{R}{\Delta x}}\}$. Meanwhile, the basis function should be linearly independent in $L^2(\rho_N^J)$ so that the basis matrix  
\begin{equation}\label{eq:Bmat}
B_n = ( \innerp{\phi_i,\phi_j}_{L^2(\rho_N^J)})_{1\leq i,j\leq n} \in \R^{n\times n}
\end{equation}
is nonsingular. Thus, we set the range of $n$ to be in $\floor{\frac{R}{\Delta x}}\times [0.2,1]$ such that $B_n$ is nonsingular while covering a wide range of dimensions. For example, when we use piecewise constant basis, we can set $n=\floor{\frac{R}{\Delta x}}$ with $\phi_i(x) = \delta(x_i-x)$, and we get $B_n=\mathrm{Diag}(\rho_N^J)$. Thus, we estimate the kernel as a vector of its values on the mesh points, with $L^2(\rho_N^J)$ being a vector space with a discrete-measure $\rho_N^J$.

With these regression data, the triplet $(\Abar_n, \bbar_n, B_n)$ can be efficiently evaluated for any basis functions using a numerical integrator to approximate the corresponding integrals. For example, with Riemann sum approximation, we compute the normal matrix $\Abar_n$ and vector $\bbar_n$ and the basis matrix $B_n$ as   
\begin{equation}\label{eq:AbB}
\begin{aligned}
\Abar_n(i,j) & = \dbinnerp{\phi_i,\phi_j} \approx \sum_{k,l} \phi_i(r_k)\phi_j(r_l) G(r_k,r_l)) \Delta x^2, \\
\bbar_n(i)  & \approx \sum_{k} \phi_i(r_k)g_N^f(r_k)) \Delta x, \\
B_n(i,j) & \approx \sum_{k} \phi_i(r_k)\phi_i(r_k)\phi_j(r_k) \rho_N^J(r_k) \Delta x.  
\end{aligned}
\end{equation}

The triplet $(\Abar_n, \bbar_n, B_n)$ is all we need for regression with regularization in the next step. 
 
 \paragraph{Step 3: Regression with DARTR.} Our DARTR method uses the norm of the SIDA-RKHS, which is the function space of identifiability as discussed in Section \ref{sec:Identifiability}. That is, our estimator is the minimizer of the regularized loss in \eqref{eq:lossFn_reg} with the regularization norm $\mathcal{R}(\phi) = \|\phi\|^2_{H_G}$ defined in \eqref{eq:norms}.

\vspace{1mm}
\noindent\textbf{Computation of the RKHS norm} In practice, we can effectively approximate the RKHS norm $\|\phi\|^2_{H_G}$ using the triplet $(\Abar_n, \bbar_n, B_n)$. It proceeds in three steps.
First, we solve the generalized eigenvalue problem $\Abar_n V =  B_n  V\Lambda$, where $\Lambda$ is a diagonal matrix of the generalized eigenvalues and the matrix $V$ has columns being eigenvectors orthonormal in the sense that $V^\top B_n V = I_n$. Here these eigenvalues approximate the eigenvalue of $\LGbar$ in \eqref{eq:LG}, and $\widehat\psi_k = V_{jk}\phi_j$ approximates the eigenfunctions of $\LGbar$. Then, we compute the square RKHS norm of $\phi= \sum_i c_i \phi_i$ as
\begin{equation}\label{eq:rkhsNormMat}
\|\phi\|^2_{H_G} = c^\top B_{rkhs} c, \, \text{ with } B_{rkhs} = (V\Lambda V^\top)^{-1}, 
\end{equation}
 where the inverse is taken as pseudo-inverse, particularly when $\Lambda$ has zero eigenvalues. 
 
With the RKHS-norm ready, we write the regularized loss for each function $\phi= \sum_i c_i \phi_i$ as 
\[
\calE_\lambda(\phi) = c^\top (\Abar_n + \lambda B_{rkhs}) c - 2 c^\top \bbar_n + C_N^f. 
\]
The regularized estimator is  \vspace{-3mm}
\begin{align}\label{eq:c_reg}
\widehat{\phi_\lambda} = \sum_{i = 1} ^ n c^i_\lambda \phi_i, \ c_\lambda = (\Abar_n + \lambda B_{rkhs})^{-1}\bbar_n. 
\end{align}
Then, we select the hyper-parameter $\lambda$ by the L-curve method (see Section \ref{sec:Lcurve}). 

\begin{remark}[Least squares to avoid matrix inverse]\label{rmk:Brkhs}
The matrix inverses can cause numerical issues when the normal matrix $\Abar$ is ill-conditioned or singular. Fortunately, the matrix inversions in $B_{rkhs}$ and in solving $(\Abar_n + \lambda B_{rkhs}) c_\lambda =\bbar_n$ can be avoided by using minimum norm least squares solution. Note that this linear equation is equivalent to $  (B_{rkhs}^{-T/2}\Abar_n B_{rkhs}^{-1/2}+ \lambda I) \widetilde c_\lambda = B_{rkhs}^{-T/2}\bbar_n$ with $\widetilde c_\lambda= B_{rkhs}^{-1/2}c_\lambda$, where $B_{rkhs}^{-T/2}$ is the transpose of the square root matrix $B_{rkhs}^{-1/2}$. Meanwhile, the square root $B_{rkhs}^{-1/2} = (V \Lambda V^\top)^{1/2}$ comes directly from \eqref{eq:rkhsNormMat}. Thus, these treatments avoid the matrix inversions and lead to more robust estimators. 
\end{remark}

We summarize the method in Algorithm \ref{alg:main}. 
\begin{algorithm}[H]
{\small
\caption{Nonparametric learning of the nonlocal kernel with spare-aware regularization}\label{alg:main}
\begin{algorithmic}[1]
\Require{The data $\{u_k,f_k\}_{k=1}^N = \{u_k(x_j),f_k(x_j)\}_{k,j=1}^{N,J}$ with $x_j = j\Delta x$ to construct the nonlocal model $R_\phi[ u]=f$.}
\Ensure{Estimator $\widehat \phi$}
\State Estimate the exploration measure $\rho_N^J$ from data as in \eqref{eq:rho_disc}, and estimate the support of the kernel from data as in \eqref{eq:suppKernel}. Let $R$ be the upper bound of the support. 
\State Get regression data $(G,g_N^f)$ in \eqref{eq:regressionData}. 
\State Select a class of hypothesis spaces $\mH_n =\mathrm{span}\{\phi_i\}_{i=1}^n$ by selecting a type of basis functions, e.g., polynomials or B-splines, $n$ in the range $\floor{\frac{R}{\Delta x}}\times [0.2,1]$.
	 \State For each $n$, compute $(\Abar_n, \bbar_n, B_n)$ as in \eqref{eq:AbB} for $\mH_n =\mathrm{span}\{\phi_i\}_{i=1}^n$, using $(G,g_N^f, \rho_N^J)$ obtained above. If the basis matrix $B_n$ is singular, remove $n$ from the range. For the $(\Abar_n,\bbar_n,B_n)$, find the best regularized estimator $\widehat c_{\lambda_n}$ by DARTR in Algorithm \ref{alg:dartr}, as well as corresponding loss value $\calE(\widehat c_{\lambda_n})$.
\State Select the optimal dimension $n^*$ (and degree if using B-spline basis) that has the minimal loss value (along with other cross-validation criteria if available). Return the estimator $\widehat \phi = \sum_{i = 1}^{n^*} c^i_{n^*} \phi_i$.
\end{algorithmic}
}
\end{algorithm}

\ifjournal \vspace{-2mm} \fi
\subsection{Hyper-parameter by the L-curve method}\label{sec:Lcurve}
We select the parameter $\lambda$ by the L-curve method \cite{hansen_LcurveIts_a,LangLu22}. 
Let $l$ be a parametrized curve in $\R^2$: 
$$ l(\lambda) = (x(\lambda), y(\lambda)) := (\text{log}(\calE(\widehat{\phi_\lambda}), \text{log}(\mathcal{R} (\widehat{\phi_\lambda} )), $$
where $\calE(\widehat{\phi_\lambda}) = c_\lambda^\top \Abar_n c_\lambda - 2c_\lambda^\top \bbar_n-C_N^f$, and $ \mathcal{R} (\phi)$ is the regularization term, for example, $\mathcal{R} (\widehat{\phi_\lambda} ) =  \| \widehat{\phi_\lambda}\|_{H_\Gbar}^2 = c_\lambda^\top B_{rkhs} c_\lambda$.
The optimal parameter is the maximizer of the curvature of $l$. In practice, we restrict $\lambda$ in the spectral range $[\lambda_{min},\lambda_{max}]$ of the operator $\LGbar$, 
\begin{align}\label{eq:opt_lambda}
\smash{	\lambda_{0} 
	= \argmax{\lambda_{\text{min}} \leq \lambda \leq \lambda_{\text{max}}}\kappa(l(\lambda)) 
	= \argmax{\lambda_{\text{min}} \leq \lambda \leq \lambda_{\text{max}}}
	\frac{x'y'' - x' y''}{(x'\,^2 + y'\,^2)^{3/2}},
}\end{align}
where $\lambda_{min}$ and $\lambda_{max}$ are computed from the smallest and the largest generalized eigenvalues of $(\Abar_n,B_n)$. 
This optimal parameter $\lambda_{0}$ balances the loss $\calE$ and the regularization (see \cite{hansen_LcurveIts_a} for more details).

\ifarXiv

\paragraph{Acknowledgments} The work of F.L. is partially funded by NSF Award DMS-1913243. FL would like to Professor Yue Yu and Professor Mauro Maggioni for inspiring discussions. 
 \bibliographystyle{myplain}
\bibliography{ref_FeiLU2022_1,ref_prop,ref_regularization,ref_nonlocal_kernel,ref_sparseRegression,ref_operator,ref_inverseP,ref_kernel_learning}
\fi

\end{document}